\algnewcommand{\OneLineFor}[1]{\State \algorithmicfor\ #1\ \algorithmicdo}
\newlist{enuminline}{enumerate*}{1}
\newlist{conditions}{enumerate}{1}
\setlist[enuminline,1]{label=\itshape\alph*\upshape)}
\setlist*[conditions,1]{label=C\arabic*.,ref=C\arabic*}
\newcommand{\transp}{\top}
\newcommand{\email}[1]{\href{mailto:#1}{\nolinkurl{#1}}}
\DeclareMathOperator*{\argmax}{arg\,max}
\theoremstyle{plain}
\newtheorem{theorem}{Theorem}
\theoremstyle{definition}
\newtheorem*{definition}{Definition}
\title{ChoiceRank: Identifying Preferences \\
from Node Traffic in Networks}
\author{
Lucas Maystre\thanks{School of Computer and Communication Sciences, EPFL, Switzerland.}\\
\email{lucas.maystre@epfl.ch}
\and
Matthias Grossglauser\footnotemark[1]\\
\email{matthias.grossglauser@epfl.ch}
}
\begin{document}
\maketitle

\begin{abstract}
Understanding how users navigate in a network is of high interest in many applications.
We consider a setting where only aggregate node-level traffic is observed and tackle the task of learning edge transition probabilities.
We cast it as a preference learning problem, and we study a model where choices follow Luce's axiom.
In this case, the $O(n)$ marginal counts of node visits are a sufficient statistic for the $O(n^2)$ transition probabilities.
We show how to make the inference problem well-posed regardless of the network's structure, and we present ChoiceRank, an iterative algorithm that scales to networks that contains billions of nodes and edges.
We apply the model to two clickstream datasets and show that it successfully recovers the transition probabilities using only the network structure and marginal (node-level) traffic data.
Finally, we also consider an application to mobility networks and apply the model to one year of rides on New York City's bicycle-sharing system.
\end{abstract}

\section{Introduction}  
\label{sec:introduction}

Consider the problem of estimating click probabilities for links between pages of a website, given a hyperlink graph and aggregate statistics on the number of times each page has been visited.
Naively, one might expect that the probability of clicking on a particular link should be roughly proportional to the traffic of the link's target.
However, this neglects important structural effects:
a page's traffic is influenced by
\begin{enuminline}
\item the number of incoming links,
\item the traffic at the pages that link to it, and
\item the traffic absorbed by competing links.
\end{enuminline}
In order to successfully infer click probabilities, it is therefore necessary to disentangle the \emph{preference} for a page (i.e., the intrinsic propensity of a user to click on a link pointing to it) from the page's \emph{visibility} (the exposure it gets from pages linking to it).
Building upon recent work by \citet{kumar2015inverting}, we present a statistical framework that tackles a general formulation of the problem:
given a network (representing possible transitions between nodes) and the marginal traffic at each node, recover the transition probabilities.
This problem is relevant to a number of scenarios (in social, information or transportation networks) where transition data is not available due to, e.g., privacy concerns or monitoring costs.

We begin by postulating the following model of traffic.
Users navigate from node to node along the edges of the network by making a choice between adjacent nodes at each step, reminiscent of the random-surfer model introduced by \citet{brin1998anatomy}.
Choices are assumed to be independent and generated according to Luce's model \citep{luce1959individual}: each node in the network is chararacterized by a latent \emph{strength} parameter, and (stochastic) choice outcomes tend to favor nodes with greater strengths.
In this model, estimating the transition probabilities amounts to estimating the strength parameters.
Unlike the setting in which choice models are traditionally studied \citep{train2009discrete, maystre2015fast, vojnovic2016parameter}, we do not observe distinct choices among well-identified sets of alternatives.
Instead, we only have access to aggregate, marginal statistics about the traffic at each node in the network.
In this setting, we make the following contributions.

\begin{enumerate}
\item We observe that marginal per-node traffic is a sufficient statistic for the strength parameters.
That is, the parameters can be inferred from marginal traffic data without any loss of information.

\item We show that if the parameters are endowed with a prior distribution, the inference problem becomes well-posed regardless of the network structure.
This is a crucial step in making the framework applicable to real-world datasets.

\item We show that model inference can scale to very large datasets.
We present an iterative EM-type inference algorithm that enables a remarkably efficient implementation---each iteration requires the computational equivalent of two iterations of PageRank.
\end{enumerate}


We evaluate two aspects of our framework using real-world networks.
We begin by demonstrating that local preferences can indeed be inferred from global traffic: we investigate the accuracy of the transition probabilities recovered by our model on three datasets for which we have ground-truth transition data.
First, we consider two hyperlink graphs, representing the English Wikipedia (over two million nodes) and a Hungarian news portal (approximately \num{40000} nodes), respectively.
We model clickstream data as a sequence of independent choices over the links available at each page.
Given only the structure of the graph and the marginal traffic at every node, we estimate the number of transitions between nodes, and we find that our estimate matches ground-truth edge-level transitions accurately in both instances.
Second, we consider the network of New York City's bicycle-sharing service.
For a given ride, given a pick-up station, we model the drop-off station as a choice out of a set of locations.
Our model yields promising results, suggesting that our method can be useful beyond clickstream data.
Next, we test the scalability of the inference algorithm.
We show that the algorithm is able to process a snapshot of the WWW hyperlink graph containing over a hundred billion edges using a single machine.

\paragraph{Organization of the paper.}
In Section~\ref{sec:model}, we formalize the network choice model.
In Section~\ref{sec:relwork}, we briefly review related literature.
In Section~\ref{sec:theory}, we present salient statistical properties of the model and its maximum-likelihood estimator, and we propose a prior distribution that makes the inference problem well-posed.
In Section~\ref{sec:algorithm}, we describe an inference algorithm that enables an efficient implementation.
We evaluate the model and the inference algorithm in Section~\ref{sec:experiments}, before concluding in Section~\ref{sec:conclusion}.
In the appendices, we provide a more in-depth discussion of our model and algorithm, and we present proofs for all the theorems stated in the main text.

\section{Network Choice Model}  
\label{sec:model}

Let $G = (V,E)$ be a directed graph on $n$ nodes (corresponding to items) and $m$ edges.
We denote the out-neighborhood of node $i$ by $N^+_i$ and its in-neighborhood by $N^-_i$.
We consider the following choice process on $G$.
A user starts at a node $i$ and is faced with alternatives $N^+_i$.
The user chooses item $j$ and moves to the corresponding node.
At node $j$, the user is faced with alternatives $N^+_j$ and chooses $k$, and so on.
At any time, the user can stop.
Figure~\ref{fig:samplenet} gives an example of a graph and the alternatives available at a step of the process.

\begin{figure}[t]
  \centering
  \includegraphics[width=.5\linewidth]{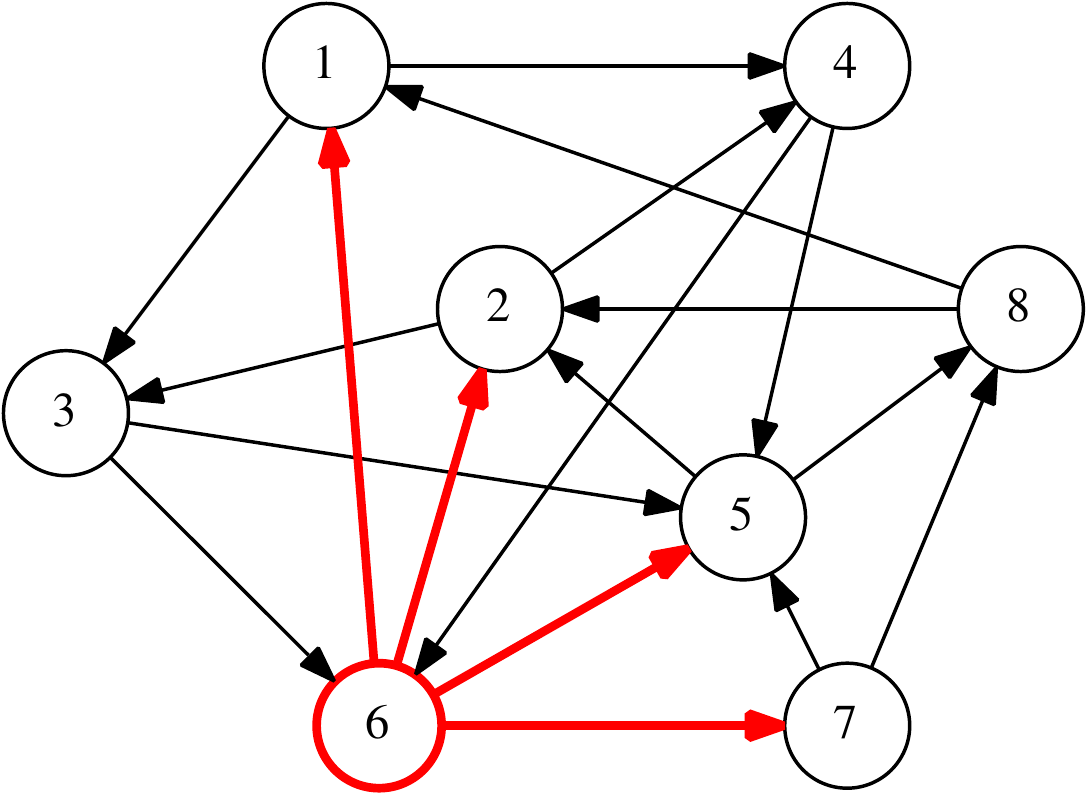}
  \caption{An illustration of one step of the process.
  The user is at node 6 and can reach nodes $N^+_6 = \{1, 2, 5, 7\}$.}
  \label{fig:samplenet}
\end{figure}

To define the transition probabilities, we posit Luce's well-known choice axiom that states that the odds of choosing item $j$ over item $j'$ do not depend on the rest of the alternatives \citep{luce1959individual}.
This axiom leads to a unique probabilistic model of choice.
For every node $i$ and every $j \in N^+_i$, the probability that $j$ is selected among alternatives $N^+_i$ can be written as
\begin{align}
\label{eq:singlelik}
p_{ij} = \frac{\lambda_j}{\sum_{k \in N^+_i} \lambda_k}
\end{align}
for some parameter vector $\bm{\lambda} = \begin{bmatrix}\lambda_1 & \cdots & \lambda_n \end{bmatrix}^\transp \in \mathbf{R}_{>0}^n$.
Intuitively, the parameter $\lambda_i$ can be interpreted as the strength (or utility) of item $i$.
Note that $p_{ij}$ depends only on the out-neighborhood of node $i$.
As such, the choice process satisfies the Markov property, and we can think of the sequence of choices as a trajectory in a Markov chain.
In the context of this model, we can formulate the inference problem as follows.
Given a directed graph $G = (V, E)$ and data on the aggregate traffic at each node, find a parameter vector $\bm{\lambda}$ that fits the data.

\section{Related Work}  
\label{sec:relwork}

A variant of the network choice model was recently introduced by \citet{kumar2015inverting}, in an article that lays much of the groundwork for the present paper.
Their generative model of traffic and the parametrization of transition probabilities based on Luce's axiom form the basis of our work.
\citeauthor{kumar2015inverting} define the \emph{steady-state inversion} problem as follows:
Given a graph $G$ and a target stationary distribution, find transition probabilities that lead to the desired stationary distribution.
This problem formulation assumes that $G$ satisfies restrictive structural properties (strong-connectedness, aperiodicity) and is valid only asymptotically, when the sequences of choices made by users are very long.
Our formulation is, in contrast, more general.
In particular, we eliminate any assumptions about the structure of $G$ and cope with finite data in a principled way---in fact, our derivations are valid for choice sequences of any length.
One of our contributions is to explain the steady-state inversion problem in terms of (asymptotic) maximum-likelihood inference in the network choice model.
Furthermore, the statistical viewpoint that we develop also leads to
\begin{enuminline}
\item a robust regularization scheme, and
\item a simple and efficient EM-type inference algorithm.
\end{enuminline}
These important extensions make the model easier to apply to real-world data.

\paragraph{Luce's choice axiom.}
The general problem of estimating parameters of models based on Luce's axiom has received considerable attention.
Several decades before Luce's seminal book \citep{luce1959individual}, \citet{zermelo1928berechnung} proposed a model and an algorithm that estimates the strengths of chess players based on pairwise comparison outcomes (his model would later be rediscovered by \citet{bradley1952rank}).
More recently, \citet{hunter2004mm} explained \citeauthor{zermelo1928berechnung}'s algorithm from the perspective of the minorization-maximization (MM) method.
This method is easily generalized to other models that are based on Luce's axiom, and it yields simple, provably convergent algorithms for maximum-likelihood (ML) or maximum-a-posteriori point estimates.
\citet{caron2012efficient} observe that these MM algorithms can be further recast as expectation-maximization (EM) algorithms by introducing suitable latent variables.
They use this observation to derive Gibbs samplers for a wide family of models.
We take advantage of this long line of work in Section~\ref{sec:algorithm} when developing an inference algorithm for the network choice model.
In recent years, several authors have also analyzed the sample complexity of the ML estimate in Luce's choice model \citep{hajek2014minimax, vojnovic2016parameter} and investigated alternative spectral inference methods \citep{negahban2012iterative, azari2013generalized, maystre2015fast}.
Some of these results could be applied to our setting, but in general they require observing choices among well-identified sets of alternatives.
Finally, we note that models based on Luce's axiom have been successfully applied to problems ranging from ranking players based on game outcomes \citep{zermelo1928berechnung, elo1978rating} to understanding consumer behavior based on discrete choices \citep{mcfadden1973conditional}, and to discriminating among multiple classes based on the output of pairwise classifiers \citep{hastie1998classification}.

\paragraph{Network analysis.}
Understanding the preferences of users in networks is of significant interest in many domains.
For brevity, we focus on literature related to hyperlink graphs.
A method that has undoubtedly had a tremendous impact in this context is PageRank \citep{brin1998anatomy}.
PageRank computes a set of scores that are proportional to the amount of time a surfer, who clicks on links randomly and uniformly, spends at each node.
These scores are based only on the structure of the graph.
The network choice model presented in this paper appears similar at first, but tackles a different problem.
In addition to the structure of the graph, it uses the traffic at each page, and computes a set of scores that reflect the (non-uniform) probability of clicking on each link.
Nevertheless, there are striking similarities in the implementation of the respective inference algorithms (see Section~\ref{sec:experiments}).
The HOTness method proposed by \citet{tomlin2003new} is somewhat related, but tries to tackle a harder problem.
It attempts to estimate jointly the traffic and the probability of clicking on each link, by using a maximum-entropy approach.
At the other end of the spectrum, BrowseRank \citep{liu2008browserank} uses detailed data collected in users' browsers to improve on PageRank.
Our method uses only marginal traffic data that can be obtained without tracking users.

\section{Statistical Properties}  
\label{sec:theory}

In this section, we describe some important statistical properties of the network choice model.
We begin by observing that $O(n)$ values summarizing the traffic at each node is a sufficient statistic for the $O(n^2)$ entries of the Markov-chain transition matrix.
We then connect our statistical model to the steady-state inversion problem defined by \citet{kumar2015inverting}.
Guided by this connection, we study the maximum-likelihood (ML) estimate of model parameters, but find that the estimate is likely to be ill-defined in many scenarios of practical interest.
Lastly, we study how to overcome this issue by introducing a prior distribution on the parameters $\bm{\lambda}$; the prior guarantees that the inference problem is well-posed.

For simplicity of exposition, we present our results for Luce's standard choice model defined in~\eqref{eq:singlelik}.
Our developments extend to the model variant proposed by \citet{kumar2015inverting}, where choice probabilities can be modulated by edge weights.
In Appendix~\ref{app:extensions}, we describe this variant and give the necessary adjustments to our developments.

\subsection{Aggregate Traffic Is a Sufficient Statistic}

Let $c_{ij}$ denote the number of transitions that occurred along edge $(i, j) \in E$.
Starting from the transition probability defined in~\eqref{eq:singlelik}, we can write the log-likelihood of $\bm{\lambda}$ given data $\mathcal{D} = \{ c_{ij} \mid (i, j) \in E \}$ as
\begin{align}
\ell(\bm{\lambda} ; \mathcal{D})
    &= \sum_{(i,j) \in E} c_{ij} \bigg[ \log \lambda_j - \log \sum_{k \in N^+_i} \lambda_k \bigg] \nonumber \\
    &= \sum_{j = 1}^n \sum_{i \in N^-_j}\!c_{ij} \log \lambda_j
       - \sum_{i = 1}^n \sum_{j \in N^+_i}\!c_{ij} \log \sum_{k \in N^+_i} \lambda_k \nonumber \\
    &= \sum_{i = 1}^n \bigg[ c^-_i \log \lambda_i - c^+_i \log \sum_{k \in N^+_i} \lambda_k \bigg], \label{eq:loglik}
\end{align}
where $c^-_i = \sum_{j \in N^-_i} c_{ji}$ and $c^+_i = \sum_{j \in N^+_i} c_{ij}$ is the aggregate number of transitions arriving in and originating from $i$, respectively.
This formulation of the log-likelihood exhibits a key feature of the model:
the set of $2n$ counts $\{ (c^-_i, c^+_i) \mid i \in V \}$ is a sufficient statistic of the $O(n^2)$ counts $\{ c_{ij} \mid (i, j) \in E \}$ for the parameters $\bm{\lambda}$.
(In Appendix~\ref{app:extensions}, we show that it is in fact minimally sufficient.)
In other words, it is enough to observe marginal information about the number of arrivals and departures at each node---we collectively call this data the \emph{traffic} at a node---and no additional information can be gained by observing the full choice process.
This makes the model particularly attractive, because it means that it is unnecessary to track users across nodes.
In several applications of practical interest, tracking users is undesirable, difficult, or outright impossible, due to
\begin{enuminline}
\item privacy reasons,
\item monitoring costs, or
\item lack of data in existing datasets.
\end{enuminline}

Note that if we make the additional assumption that the flow in the network is conserved, then $c^-_i = c^+_i$.
If users' typical trajectories consist of many hops, it is reasonable to approximate $c^-_i$ or $c^+_i$ using that assumption, should one of the two quantities be missing.

\subsection{Connection to the Steady-State Inversion Problem}

In recent work, \citet{kumar2015inverting} define the problem of \emph{steady-state inversion} as follows:
Given a strongly-connected directed graph $G = (V, E)$ and a target distribution over the nodes $\bm{\pi}$, find a Markov chain on $G$ with stationary distribution $\bm{\pi}$.
As there are $m = O(n^2)$ degrees of freedom (the transition probabilities) for $n$ constraints (the stationary distribution), the problem is in most cases underdetermined.
Following Luce's ideas, the transition probabilities are constrained to be proportional to a latent score of the destination node as per \eqref{eq:singlelik}, thus reducing the number of parameters from $m$ to $n$.
Denote by $P(\bm{s})$ the Markov-chain transition matrix parametrized with scores $\bm{s}$.
The score vector $\bm{s}$ is a solution for the steady-state inversion problem if and only if $\bm{\pi} = \bm{\pi} P(\bm{s})$, or equivalently
\begin{align}
\label{eq:balance}
\pi_i = \sum_{j \in N^-_i} \frac{s_i}{\sum_{k \in N^+_j} s_k} \pi_j \quad \forall i.
\end{align}
In order to formalize the connection between \citeauthor{kumar2015inverting}'s work and ours, we now express the steady-state inversion problem as that of asymptotic maximum-likelihood estimation in the network choice model.
Suppose that we observe node-level traffic data $\mathcal{D} = \{ (c^-_i, c^+_i) \mid i \in V \}$ about a trajectory of length $T$ starting at an arbitrary node.
We want to obtain an estimate of the parameters $\bm{\lambda}^\star$ by maximizing the average log-likelihood $\hat{\ell}(\bm{\lambda}) = \frac{1}{T} \ell (\bm{\lambda} ; \mathcal{D})$.
From standard convergence results for Markov chains \citep{kemeny1976finite}, it follows that as $G$ is strongly connected, $\lim_{T \to \infty} c^-_i / T = \lim_{T \to \infty} c^+_i / T = \pi_i$.
Therefore,
\begin{align*}
\hat{\ell}(\bm{\lambda})
    = \sum_{i = 1}^n \bigg[ \frac{c^-_i}{T} \log \lambda_i - \frac{c^+_i}{T} \log \sum_{k \in N^+_i} \lambda_k \bigg]
    \xrightarrow{T \to \infty} \sum_{i = 1}^n \pi_i \bigg[ \log \lambda_i - \log \sum_{k \in N^+_i} \lambda_k \bigg].
\end{align*}
Let $\bm{\lambda}^\star$ be a maximizer of the average log-likelihood.
When $T \to \infty$, the optimality condition $\nabla \hat{\ell} (\bm{\lambda}^\star) = \bm{0}$ implies
\begin{align}
\frac{\partial \hat{\ell}(\bm{\lambda})}{\partial \lambda_i} \bigg|_{\bm{\lambda} = \bm{\lambda}^\star}
    = \frac{\pi_i}{\lambda^\star_i} - \sum_{j \in N^-_i} \frac{\pi_j}{\sum_{k \in N^+_j} \lambda^\star_k} = 0
    \iff \pi_i &= \sum_{j \in N^-_i} \frac{\lambda^\star_i}{\sum_{k \in N^+_j} \lambda^\star_k} \pi_j \quad \forall i. \label{eq:optimality}
\end{align}
Comparing~\eqref{eq:optimality} to~\eqref{eq:balance}, it is clear that $\bm{\lambda}^\star$ is a solution of the steady-state inversion problem.
As such, the network choice model presented in this paper can be viewed as a principled extension of the steady-state inversion problem to the finite-data case.

\subsection{Maximum-Likelihood Estimate}
\label{sec:maxlik}

The log-likelihood~\eqref{eq:loglik} is not concave in $\bm{\lambda}$, but it can be made concave using the simple reparametrization $\lambda_i = e^{\theta_i}$.
Therefore, any local minimum of the likelihood is a global minimum.
Unfortunately, it turns out that the conditions guaranteeing that the ML estimate is well-defined (i.e., that it exists and is unique) are restrictive and impractical.
We illustrate this by providing a necessary condition, and for brevity we defer the comprehensive analysis of the ML estimate to Appendix~\ref{app:maxlik}.
We begin with a definition that uses the notion of \emph{hypergraph}, a generalized graph where edges may be any non-empty subset of nodes.
\begin{definition}[Comparison hypergraph]
Given a directed graph $G = (V, E)$, the \emph{comparison hypergraph} is the hypergraph $H = (V, \mathcal{A})$, with $\mathcal{A} = \{ N^+_i \mid i \in V \}$.
\end{definition}
Intuitively, $H$ is the hypergraph induced by the sets of alternatives available at each node.
Figure~\ref{fig:samplehyp} provides an example of a graph and of its associated comparison hypergraph.
Equipped with this definition, we can state the following theorem that is a reformulation of a well-known result for Luce's choice model \citep{hunter2004mm}.
\begin{theorem}
\label{thm:mlnecessary}
If the comparison hypergraph is not connected, then for any data $\mathcal{D}$ there are $\bm{\lambda}$ and $\bm{\mu}$ such that $\bm{\lambda} \neq c \bm{\mu}$ for any $c \in \mathbf{R}_{>0}$ and $\ell(\bm{\lambda} ; \mathcal{D}) = \ell(\bm{\mu} ; \mathcal{D}).$
\end{theorem}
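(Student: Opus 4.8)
The plan is to use the disconnectedness of the comparison hypergraph $H = (V, \mathcal{A})$, $\mathcal{A} = \{N^+_i \mid i \in V\}$, to manufacture a one-parameter family of parameter vectors that all achieve the same log-likelihood. First I would turn disconnectedness into a combinatorial statement: if $H$ has at least two connected components, then $V$ can be split into two nonempty parts $V_1$ and $V_2$ so that every hyperedge $N^+_i \in \mathcal{A}$ lies entirely within $V_1$ or entirely within $V_2$. Indeed, assign each component wholly to one side (both sides nonempty since there are $\geq 2$ components); any nonempty $N^+_i$ has all its vertices mutually linked through that hyperedge, hence within a single component and thus on one side, while empty out-neighborhoods are trivially contained in either side.

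Next, given arbitrary data $\mathcal{D}$ and an arbitrary reference vector $\bm{\lambda} \in \mathbf{R}_{>0}^n$, I would define for each scalar $\alpha > 0$ the vector $\bm{\mu}^{(\alpha)}$ with $\mu^{(\alpha)}_j = \lambda_j$ for $j \in V_1$ and $\mu^{(\alpha)}_j = \alpha \lambda_j$ for $j \in V_2$. Because each hyperedge lies on one side of the partition, $\sum_{k \in N^+_i} \mu^{(\alpha)}_k$ equals $\sum_{k \in N^+_i} \lambda_k$ whenever $N^+_i \subseteq V_1$ and equals $\alpha \sum_{k \in N^+_i} \lambda_k$ whenever $N^+_i \subseteq V_2$ (a node with $N^+_i = \emptyset$ has $c^+_i = 0$, so its ill-defined term drops out). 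Substituting into the log-likelihood~\eqref{eq:loglik} and subtracting, every term cancels except a multiple of $\log \alpha$:
\[
\ell(\bm{\mu}^{(\alpha)}; \mathcal{D}) - \ell(\bm{\lambda}; \mathcal{D}) = \Big( \sum_{j \in V_2} c^-_j \;-\; \sum_{i\,:\,N^+_i \subseteq V_2} c^+_i \Big) \log \alpha .
\]

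The crux is showing the coefficient of $\log \alpha$ vanishes. I would argue the two sums range over exactly the same edge counts: a count contributing to $c^-_j$ for some $j \in V_2$ comes from an edge $(i,j)$ with $j \in N^+_i \cap V_2$, which forces $N^+_i \subseteq V_2$; conversely a count contributing to $c^+_i$ with $N^+_i \subseteq V_2$ comes from an edge $(i,j)$ with $j \in N^+_i \subseteq V_2$, hence contributes to $c^-_j$ with $j \in V_2$. So $\sum_{j \in V_2} c^-_j = \sum_{i : N^+_i \subseteq V_2} c^+_i$, and $\ell(\bm{\mu}^{(\alpha)}; \mathcal{D}) = \ell(\bm{\lambda}; \mathcal{D})$ for all $\alpha > 0$. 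Finally, taking any $\alpha \neq 1$ and setting $\bm{\mu} := \bm{\mu}^{(\alpha)}$ gives $\bm{\mu} \neq c\bm{\lambda}$ for every $c \in \mathbf{R}_{>0}$: matching the $V_1$-coordinates would force $c = 1$ while matching the $V_2$-coordinates would force $c = \alpha$. The main obstacle is precisely this bookkeeping identity — making rigorous why every transition entering $V_2$ originates from a node whose entire out-neighborhood lies in $V_2$ — since everything else is a direct substitution into~\eqref{eq:loglik}.
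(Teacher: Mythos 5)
Your proof is correct and takes essentially the same route as the paper's: partition the vertices so that every hyperedge $N^+_i$ lies entirely in one block, rescale the parameters of one block by $\alpha \neq 1$, and check that the log-likelihood is unchanged. The only difference is presentational: the paper substitutes into the edge-level form of the log-likelihood, where each term $\log\lambda_j - \log\sum_{k\in N^+_i}\lambda_k$ with all indices on one side is manifestly scale-invariant, whereas you work with the aggregated form~\eqref{eq:loglik} and therefore need the (correct, and correctly justified) counting identity $\sum_{j\in V_2} c^-_j = \sum_{i:\,N^+_i\subseteq V_2} c^+_i$, which the paper's presentation sidesteps.
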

In short, the proof shows that rescaling all the parameters in one of the connected components does not change the value of the likelihood function.
The network of Figure~\ref{fig:samplenet} illustrates an instance where the condition fails:
although the graph $G$ is strongly connected, its associated comparison hypergraph $H$ (depicted in Figure~\ref{fig:samplehyp}) is disconnected, and no matter what the data $\mathcal{D}$ is, the ML estimate will never be uniquely defined.
In fact, in Appendix~\ref{app:maxlik}, we demonstrate that Theorem~\ref{thm:mlnecessary} is just the tip of the iceberg.
We provide an example where the ML estimate \emph{does not exist} even though the comparison hypergraph is connected, and we explain that verifying a necessary and sufficient condition for the existence of the ML estimate is computationally \emph{more expensive} than solving the inference problem itself.

\begin{figure}[t]
  \centering
  \includegraphics[width=.5\linewidth]{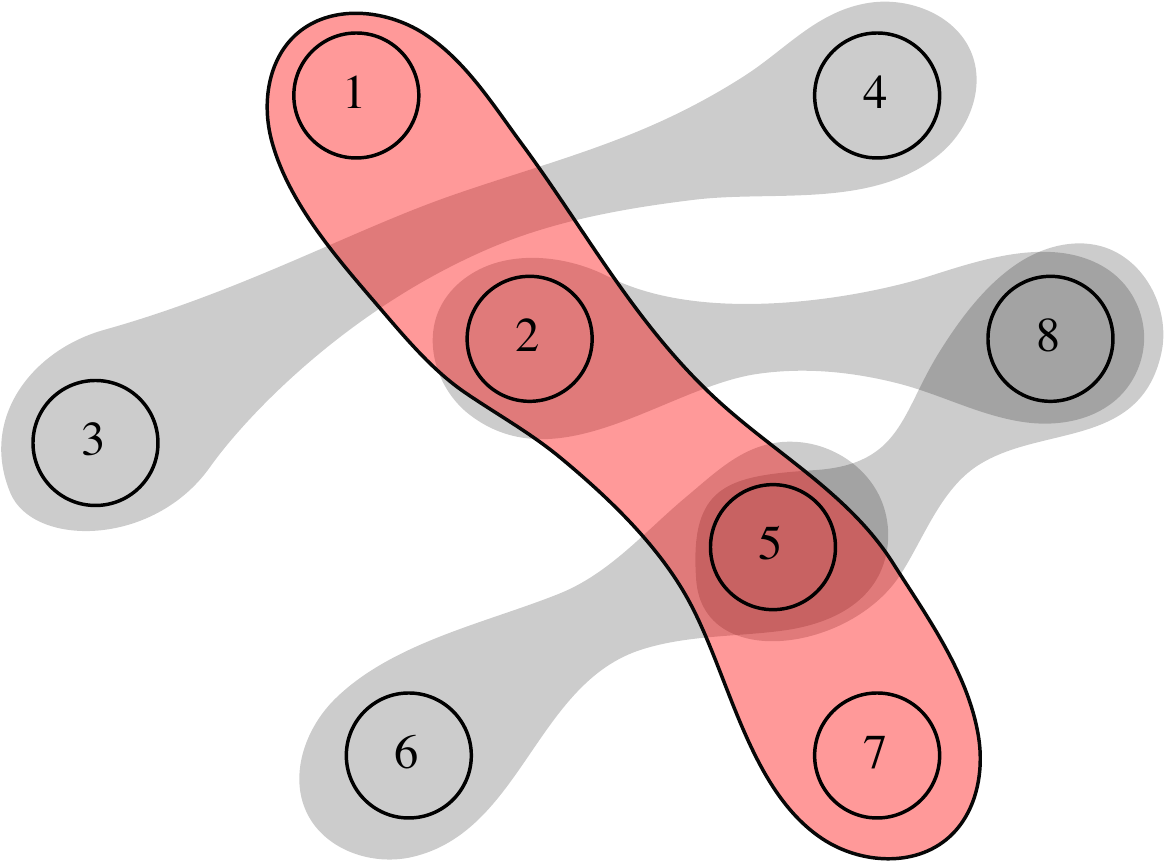}
  \caption{The comparison hypergraph associated to the network of Fig.~\ref{fig:samplenet}.
The hyperedge associated to $N^+_6$ is highlighted in red.
Note that the component $\{3, 4\}$ is disconnected from the rest of the hypergraph.}
  \label{fig:samplehyp}
\end{figure}

\subsection{Well-Posed Inference}

Following the ideas of \citet{caron2012efficient}, we introduce an independent Gamma prior on each parameter, i.e., i.i.d. $\lambda_1, \ldots, \lambda_n \sim \text{Gamma}(\alpha, \beta)$.
Adding the log-prior to the log-likelihood, we can write the log-posterior as
\begin{align}
\begin{aligned}
\label{eq:logpost}
\log p(\bm{\lambda} \mid \mathcal{D}) = \sum_{i = 1}^n
  \bigg[ (c^-_i + \alpha - 1) \log \lambda_i
        - c^+_i \log\!\sum_{k \in N^+_i}\!\lambda_k  - \beta \lambda_i \bigg] + \kappa,
\end{aligned}
\end{align}
where $\kappa$ is a constant that is independent of $\bm{\lambda}$.
The Gamma prior translates into a form of regularization that makes the inference problem well-posed, as shown by the following theorem.

\begin{theorem}
\label{thm:map}
If i.i.d. $\lambda_1, \ldots, \lambda_n \sim \text{Gamma}(\alpha, \beta)$ with $\alpha > 1$, then the log-posterior~\eqref{eq:logpost} always has a unique maximizer $\bm{\lambda}^\star \in \mathbf{R}^n_{>0}$.
\end{theorem}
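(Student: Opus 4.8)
The plan is to pass to the reparametrization $\lambda_i = e^{\theta_i}$ already invoked in Section~\ref{sec:maxlik}, under which the log-posterior~\eqref{eq:logpost} becomes a finite-valued function $g(\bm{\theta})$ on all of $\mathbf{R}^n$. Since $\bm{\lambda} \mapsto (\log\lambda_1, \dots, \log\lambda_n)$ is a bijection from $\mathbf{R}^n_{>0}$ onto $\mathbf{R}^n$, it is equivalent to show that $g$ has a unique maximizer on $\mathbf{R}^n$. Two structural facts are immediate. First, $g$ is \emph{strictly} concave: the log-likelihood part is concave in $\bm{\theta}$ (Section~\ref{sec:maxlik}), the part $\sum_i (\alpha-1)\theta_i$ is linear, and the part $-\beta\sum_i e^{\theta_i}$ has Hessian $-\beta\operatorname{diag}(e^{\theta_1},\dots,e^{\theta_n}) \prec 0$, so the Hessian of $g$ is everywhere negative definite; this settles uniqueness once existence is known. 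Second, $g$ is continuous (hence closed) on $\mathbf{R}^n$, so existence follows once I show that $g$ has no recession direction, i.e.\ that $g(\bm{\theta}_0 + t\bm{d}) \to -\infty$ as $t\to\infty$ for every $\bm{\theta}_0$ and every $\bm{d}\neq\bm{0}$; for a closed concave function this is exactly the statement that all superlevel sets are compact.

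I would prove this coercivity by cases on the sign pattern of $\bm{d}$. If some coordinate $d_i > 0$, the term $-\beta e^{\theta_{0,i} + t d_i}$ tends to $-\infty$ exponentially in $t$, while every other term of $g$ is bounded above by an affine function of $t$ (the $\theta$-linear terms are linear, and each $-c^+_j \log\sum_{k\in N^+_j} e^{\theta_k}$ is bounded above by an affine function because $\log\sum_k e^{a_k} \ge a_{k_0}$ for any fixed $k_0$); hence $g(\bm{\theta}_0 + t\bm{d}) \to -\infty$. The substantive case is $\bm{d} \le \bm{0}$ coordinatewise, so $\sum_i d_i < 0$ since $\bm{d}\neq\bm{0}$. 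Put $M_i = \max_{k\in N^+_i} d_k \le 0$ (only $i$ with $N^+_i\neq\emptyset$ matter, since otherwise $c^+_i = 0$). Then $\log\sum_{k\in N^+_i} e^{\theta_{0,k}+td_k} = tM_i + O(1)$ and $-\beta\sum_i e^{\theta_{0,i}+td_i}$ is bounded as $t\to\infty$, so
\[
g(\bm{\theta}_0 + t\bm{d}) = t \Bigl[ (\alpha-1)\sum_i d_i + \sum_i c^-_i d_i - \sum_i c^+_i M_i \Bigr] + O(1).
\]
Rearranging the last two sums over edges gives $\sum_i c^-_i d_i - \sum_i c^+_i M_i = \sum_{(i,j)\in E} c_{ij}(d_j - M_i) \le 0$, because $j \in N^+_i$ forces $d_j \le M_i$ and $c_{ij}\ge 0$; since $\alpha>1$ and $\sum_i d_i < 0$, the bracketed coefficient of $t$ is strictly negative, and again $g(\bm{\theta}_0 + t\bm{d}) \to -\infty$.

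Putting the pieces together, $g$ is a strictly concave, continuous function on $\mathbf{R}^n$ with no recession direction, so it attains its maximum at a unique point $\bm{\theta}^\star$, and $\bm{\lambda}^\star = (e^{\theta^\star_1},\dots,e^{\theta^\star_n}) \in \mathbf{R}^n_{>0}$ is the unique maximizer of~\eqref{eq:logpost}. The main obstacle is the second coercivity case: the terms $-c^+_i\log\sum_{k\in N^+_i}\lambda_k$ can individually push the log-posterior toward $+\infty$ when an entire out-neighborhood is sent to $0$, so coercivity cannot be checked term by term; the edge-rearrangement identity is precisely what shows that these terms never outpace the linear drift $(\alpha-1)\sum_i d_i$ contributed by the Gamma shape parameter. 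This is also where $\alpha > 1$ is essential: if $\alpha \le 1$ that drift vanishes or reverses, and the maximizer can fail to exist or to be unique (cf.\ the discussion in Section~\ref{sec:maxlik} and Theorem~\ref{thm:mlnecessary}).
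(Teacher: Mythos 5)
Your proof is correct, and its skeleton---reparametrize via $\lambda_i = e^{\theta_i}$, obtain uniqueness from strict concavity contributed by the prior, obtain existence from coercivity of the reparametrized log-posterior---is the same as the paper's (Theorem~\ref{thm:wmap} in Appendix~\ref{app:extensions}, of which Theorem~\ref{thm:map} is the unweighted special case). Where you genuinely diverge is the coercivity step. The paper sidesteps your directional case analysis entirely by regrouping the log-likelihood by edges: for any transition counts $\{c_{ij}\}$ consistent with the marginals it equals $\sum_{i}\sum_{j \in N^+_i} c_{ij}\bigl[\theta_j - \log\sum_{k \in N^+_i} e^{\theta_k}\bigr]$, and each bracket is $\le 0$ because $j \in N^+_i$; hence the entire likelihood is bounded above by a constant and coercivity is carried solely by the separable prior term $\sum_i\bigl[(\alpha-1)\theta_i - \beta e^{\theta_i}\bigr]$, which tends to $-\infty$ whenever any $\lvert\theta_i\rvert \to \infty$ (this is where $\alpha > 1$ enters). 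Your recession analysis---splitting on whether the direction has a positive coordinate, and in the all-nonpositive case using the rearrangement $\sum_i c^-_i d_i - \sum_i c^+_i M_i = \sum_{(i,j)\in E} c_{ij}(d_j - M_i) \le 0$---is a correct but longer route to the same conclusion; indeed your rearrangement is the same edge-level pairing the paper uses, applied to asymptotic slopes rather than to function values. Your remark that coercivity cannot be checked term by term is true only for the node-grouped form~\eqref{eq:logpost}; grouped by edges it can, and that is the paper's shortcut. What your version buys in exchange for the extra work is an explicit linear rate of decay along downhill rays and a transparent accounting of exactly where the hypothesis $\alpha > 1$ is used.
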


The condition $\alpha > 1$ ensures that the prior has a nonzero mode.
In short, the proof of Theorem~\ref{thm:map} shows that as a result of the Gamma prior, the log-posterior can be reparametrized into a strictly concave function with bounded super-level sets (if $\alpha > 1$).
This guarantees that the log-posterior will always have exactly one maximizer.
Unlike the results that we derive for the ML estimate, Theorem~\ref{thm:map} does not impose any condition on the graph $G$ for the estimate to be well-defined.

\paragraph{Remark.}
Note that varying the rate $\beta$ in the Gamma prior simply rescales the parameters $\bm{\lambda}$.
Furthermore, it is clear from~\eqref{eq:singlelik} that such a rescaling affects neither the likelihood of the observed data nor the prediction of future transitions.
As a consequence, we may assume that $\beta = 1$ without loss of generality.

\section{Inference Algorithm}  
\label{sec:algorithm}

The maximizer of the log-posterior does not have a closed-form solution.
In the spirit of the algorithms of \citet{hunter2004mm} for variants of Luce's choice model, we develop a minorization-maximization (MM) algorithm.
Simply put, the algorithm iteratively refines an estimate of the maximizer by solving a sequence of surrogates of the log-posterior.
Using the inequality $\log x \le \log \tilde{x} + x/\tilde{x} - 1$ (with equality if and only if $x = \tilde{x}$), we can lower-bound the log-posterior~\eqref{eq:logpost} by
\begin{align*}
&f^{(t)}(\bm{\lambda}) =
    \sum_{i = 1}^n \bigg[ (c^-_i + \alpha - 1) \log \lambda_i 
                         - c^+_i \bigg( \log\!\sum_{k \in N^+_i}\!\lambda^{(t)}_k
                                       +\frac{\sum_{k \in N^+_i}\!\lambda_k}{\sum_{k \in N^+_i}\!\lambda^{(t)}_k} -1 \bigg)
                         - \beta \lambda_i \bigg] + \kappa,
\end{align*}
with equality if and only if $\bm{\lambda} = \bm{\lambda}^{(t)}$.
Starting with an arbitrary $\bm{\lambda}^{(0)} \in \mathbf{R}^n_{>0}$, we repeatedly solve the optimization problem
\begin{align*}
\bm{\lambda}^{(t+1)} = \argmax_{\bm{\lambda}} f^{(t)}(\bm{\lambda}).
\end{align*}
Unlike the maximization of the log-posterior, the surrogate optimization problem has a closed-form solution, obtained by setting $\nabla f^{(t)}$ to $\bm{0}$:
\begin{align}
\label{eq:mmiter}
\lambda_i^{(t + 1)} = \frac{c^-_i + \alpha - 1}{\sum_{j \in N^-_i} \gamma_j^{(t)} + \beta},
\ \gamma_j^{(t)} = \frac{c^+_j}{\sum_{k \in N^+_j} \lambda_k^{(t)}}.
\end{align}
The iterates provably converge to the maximizer of~\eqref{eq:logpost}, as shown by the following theorem.

\begin{theorem}
\label{thm:mmconv}
Let $\bm{\lambda}^\star$ be the unique maximum a-posteriori estimate.
Then for any initial $\bm{\lambda}^{(0)} \in \mathbf{R}^n_{> 0}$ the sequence of iterates defined by~\eqref{eq:mmiter} converges to $\bm{\lambda}^\star$.
\end{theorem}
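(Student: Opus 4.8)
The plan is to run the classical monotone-ascent argument for minorization--maximization (MM) algorithms, mirroring the convergence analysis of \citet{hunter2004mm} for Luce-type MM algorithms and using the strict concavity and coercivity established in the proof of Theorem~\ref{thm:map} to pin down the limit. Write $g(\bm{\lambda}) = \log p(\bm{\lambda} \mid \mathcal{D})$ for the log-posterior~\eqref{eq:logpost}, let $f^{(t)}$ be the surrogate built at $\bm{\lambda}^{(t)}$, and let $M$ be the map defined by the closed form~\eqref{eq:mmiter}, so that $\bm{\lambda}^{(t+1)} = M(\bm{\lambda}^{(t)})$; note that $M$ sends $\mathbf{R}^n_{>0}$ into itself and is continuous there, since for $\alpha > 1$ its numerators $c^-_i + \alpha - 1$ are strictly positive and its denominators are bounded below by $\beta > 0$. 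First I would record the two defining properties of the surrogate: from $\log x \le \log \tilde{x} + x/\tilde{x} - 1$ one gets $f^{(t)}(\bm{\lambda}) \le g(\bm{\lambda})$ for every $\bm{\lambda} \in \mathbf{R}^n_{>0}$, with equality at $\bm{\lambda} = \bm{\lambda}^{(t)}$. Since $\bm{\lambda}^{(t+1)}$ maximizes $f^{(t)}$, this yields the ascent chain
\begin{align*}
g(\bm{\lambda}^{(t+1)}) \ge f^{(t)}(\bm{\lambda}^{(t+1)}) \ge f^{(t)}(\bm{\lambda}^{(t)}) = g(\bm{\lambda}^{(t)}) ,
\end{align*}
so $\{ g(\bm{\lambda}^{(t)}) \}_{t \ge 0}$ is non-decreasing, and as $g$ attains a finite maximum by Theorem~\ref{thm:map}, the scalar sequence $g(\bm{\lambda}^{(t)})$ converges.

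The second step is compactness. In the reparametrized coordinates $\theta_i = \log \lambda_i$ used in the proof of Theorem~\ref{thm:map}, $g$ is strictly concave with bounded super-level sets. Because $g(\bm{\lambda}^{(t)}) \ge g(\bm{\lambda}^{(0)})$ for all $t$, the whole sequence of iterates lies in the super-level set $S = \{ \bm{\lambda} \in \mathbf{R}^n_{>0} : g(\bm{\lambda}) \ge g(\bm{\lambda}^{(0)}) \}$, whose image in $\bm{\theta}$-space is compact; in particular the iterates are uniformly bounded above and uniformly bounded away from the boundary $\lambda_i \downarrow 0$. (The same can be read off directly from~\eqref{eq:mmiter}, since $\lambda^{(t+1)}_i \le (c^-_i + \alpha - 1)/\beta$ and the denominator is continuous, hence bounded, on $S$.)

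The third step identifies the limit. I claim that the only fixed point of $M$ in $\mathbf{R}^n_{>0}$ is $\bm{\lambda}^\star$. If $M(\bar{\bm{\lambda}}) = \bar{\bm{\lambda}}$, then $\bar{\bm{\lambda}}$ maximizes the surrogate $f$ built at $\bar{\bm{\lambda}}$, and being an interior maximizer of a differentiable function it satisfies $\nabla f(\bar{\bm{\lambda}}) = \bm{0}$; since $g - f \ge 0$ on $\mathbf{R}^n_{>0}$ and vanishes at $\bar{\bm{\lambda}}$, that point also minimizes $g - f$, so $\nabla g(\bar{\bm{\lambda}}) = \nabla f(\bar{\bm{\lambda}}) = \bm{0}$, and strict concavity of $g$ in $\bm{\theta}$ forces $\bar{\bm{\lambda}} = \bm{\lambda}^\star$; conversely $\bm{\lambda}^\star$ is a fixed point, since the surrogate built at it is concave with gradient $\nabla g(\bm{\lambda}^\star) = \bm{0}$ there, so~\eqref{eq:mmiter} returns $\bm{\lambda}^\star$ itself. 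The same gradient-tangency reasoning shows that the ascent is \emph{strict} off this fixed point: if $\bm{\lambda} \ne \bm{\lambda}^\star$ then the surrogate $f$ built at $\bm{\lambda}$ has $\nabla f(\bm{\lambda}) = \nabla g(\bm{\lambda}) \ne \bm{0}$ by strict concavity of $g$, so $\bm{\lambda}$ does not maximize the concave function $f$ and $f(M(\bm{\lambda})) > f(\bm{\lambda})$, whence $g(M(\bm{\lambda})) \ge f(M(\bm{\lambda})) > f(\bm{\lambda}) = g(\bm{\lambda})$. To conclude, take any subsequential limit $\bm{\lambda}^{(t_k)} \to \bar{\bm{\lambda}}$, which exists because the iterates lie in the compact set $S$, and hence satisfies $\bar{\bm{\lambda}} \in S \subset \mathbf{R}^n_{>0}$; by continuity of $g$ and $M$ together with convergence of the full sequence $g(\bm{\lambda}^{(t)})$,
\begin{align*}
g(M(\bar{\bm{\lambda}})) = \lim_{k} g(\bm{\lambda}^{(t_k + 1)}) = \lim_{k} g(\bm{\lambda}^{(t_k)}) = g(\bar{\bm{\lambda}}) ,
\end{align*}
which by the strict-ascent property forces $\bar{\bm{\lambda}}$ to be a fixed point of $M$, i.e.\ $\bar{\bm{\lambda}} = \bm{\lambda}^\star$. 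Thus every convergent subsequence of $\{ \bm{\lambda}^{(t)} \}$ has limit $\bm{\lambda}^\star$, and since the iterates live in a compact set, $\bm{\lambda}^{(t)} \to \bm{\lambda}^\star$.

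The routine parts are the minorization inequality and the gradient computations. The step needing the most care is the one tying the fixed points of $M$ to the unique stationary point of $g$ (the gradient-tangency argument), together with the accompanying strict-ascent claim, since this is exactly where $\alpha > 1$ and the strict concavity and bounded super-level sets from the proof of Theorem~\ref{thm:map} are indispensable: without them the iterates could in principle drift toward the boundary $\lambda_i \downarrow 0$ or toward $+\infty$, a subsequential limit might fail to lie in $\mathbf{R}^n_{>0}$, and the whole argument would break down.
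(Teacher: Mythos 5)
Your proof is correct and follows essentially the same route as the paper's: the monotone-ascent property of the MM surrogate, strict concavity and bounded super-level sets of the reparametrized log-posterior from Theorem~\ref{thm:map}, and the tangency argument identifying fixed points of the iteration map with the unique stationary point of the log-posterior. You fill in the compactness and subsequence details that the paper's (terser) proof leaves implicit, which is a welcome strengthening rather than a deviation.
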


Theorem~\ref{thm:mmconv} follows from a standard result on the convergence of MM algorithms and uses the fact that the log-posterior increases after each iteration.
Furthermore, it is known that MM algorithms exhibit geometric convergence in a neighborhood of the maximizer \citep{lange2000optimization}.
A thorough investigation of the convergence properties is left for future work.

\begin{algorithm}[t]
  \caption{ChoiceRank}
  \label{alg:choicerank}
  \begin{algorithmic}[1]
    \Require graph $G = (V, E)$, counts $\{ (c^-_i, c^+_i) \}$
    \State $\bm{\lambda} \gets [1, \ldots, 1]$
    \Repeat
      \State $\bm{z} \gets \bm{0}_n$
      \Comment{Recompute $\bm{\gamma}$}
      \OneLineFor{$(i, j) \in E$} $z_i \gets z_i + \lambda_j$
      \OneLineFor{$i \in V$} $\gamma_i \gets c^+_i / z_i$
      \State $\bm{z} \gets \bm{0}_n$
      \Comment{Recompute $\bm{\lambda}$}
      \OneLineFor{$(i, j) \in E$} $z_j \gets z_j + \gamma_i$
      \OneLineFor{$i \in V$} $\lambda_i \gets (c^-_i + \alpha - 1) / (z_i + \beta)$
    \Until $\bm{\lambda}$ has converged
  \end{algorithmic}
\end{algorithm}

The structure of the updates in~\eqref{eq:mmiter} leads to an extremely simple and efficient implementation, given in Algorithm~\ref{alg:choicerank}: we call it ChoiceRank.
A graphical representation of an iteration from the perspective of a single node is given in Figure~\ref{fig:msgpassing}.
Each iteration consists of two phases of message passing, with $\gamma_i$ flowing towards in-neighbors $N^-_i$, then $\lambda_i$ flowing towards out-neighbors $N^+_i$.
The updates to a node's state are a function of the sum of the messages.
As the algorithm does two passes over the edges and two passes over the vertices, an iteration takes $O(m + n)$ time.
The edges can be processed in any order, and the algorithm maintains a state over only $O(n)$ values associated with the vertices.
Furthermore, the algorithm can be conveniently expressed in the well-known vertex-centric programming model \citep{malewicz2010pregel}.
This makes it easy to implement ChoiceRank inside scalable, optimized graph-processing systems such as Apache Spark \citep{gonzalez2014graphx}.

\begin{figure}[t]
  \centering
  \includegraphics[width=0.5\linewidth]{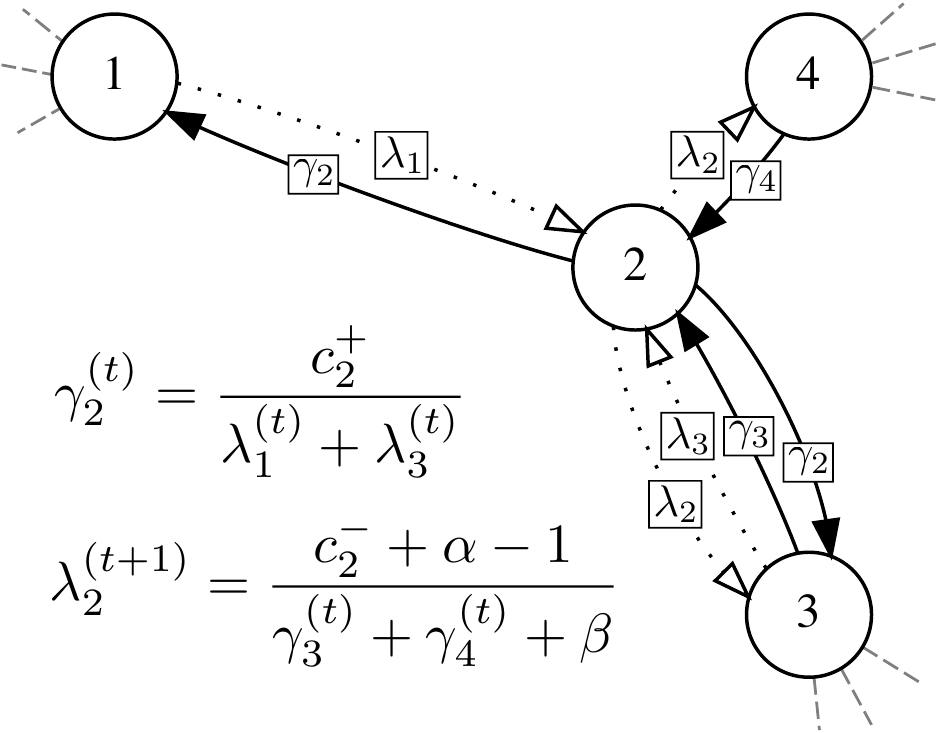}
  \caption{One iteration of ChoiceRank from the perspective of node $2$.
  Messages flow in both directions along the edges of the graph $G$, first in the reverse direction (in dotted) then in the forward direction (in solid).}
  \label{fig:msgpassing}
\end{figure}

\paragraph{EM viewpoint.}
The update~\eqref{eq:mmiter} can also be explained from an expectation-maximization (EM) viewpoint, by introducing suitable latent variables \citep{caron2012efficient}.
This viewpoint enables a Gibbs sampler that can be used for Bayesian inference.
We present the EM derivation in Appendix~\ref{app:algorithm}, but leave a study of fully Bayesian inference in the network choice model for future work.

\section{Experimental Evaluation}  
\label{sec:experiments}

In this section, we investigate
\begin{enuminline}
\item the ability of the network choice model to accurately recover transitions in real-world scenarios, and
\item the potential of ChoiceRank to scale to very large networks.
\end{enuminline}

\subsection{Accuracy on Real-World Data}
\label{sec:accuracy}

We evaluate the network choice model on three datasets that are representative of two distinct application domains.
Each dataset can be represented as a set of transition counts $\{ c_{ij} \}$ on a directed graph $G = (V,E)$.
We aggregate the transition counts into marginal traffic data $\{ (c^-_i, c^+_i) \mid i \in V \}$ and fit a network choice model by using ChoiceRank.
We set $\alpha = 2.0$ and $\beta = 1.0$ (these small values simply guarantee the convergence of the algorithm) and declare convergence when $\lVert \bm{\lambda}^{(t)} - \bm{\lambda}^{(t-1)} \rVert_1 / n < 10^{-8}$.
Given $\bm{\lambda}$, we estimate transition probabilities using $p_{ij} \propto \lambda_j$ as given by \eqref{eq:singlelik}.
To the best of our knowledge, there is no other published method tackling the problem of estimating transition probabilities from marginal traffic data.
Therefore, we compare our method to three baselines based on simple heuristics.
\begin{description}[topsep=1ex,itemsep=0ex]
\item[Traffic] Transitions probabilities are proportional to the traffic of the target node: $q_{ij}^T \propto c_j^{-}$.
\item[PageRank] Transition probabilities are proportional to the PageRank score of the target node: $q_{ij}^P \propto \text{PR}_j$.
\item[Uniform] Any transition is equiprobable: $q_{ij}^U \propto 1$.
\end{description}
The four estimates are compared against ground-truth transition probabilities derived from the edge traffic data: $p_{ij}^\star \propto c_{ij}$.
We emphasize that although per-edge transition counts $\{c_{ij}\}$ are needed to \emph{evaluate} the accuracy of the network choice model (and the baselines), these counts are not necessary for \emph{learning} the model---per-node marginal counts are sufficient.

Given a node $i$, we measure the accuracy of a distribution $\bm{q}_i$ over outgoing transitions using two error metrics, the KL-divergence and the (normalized) rank displacement:
\begin{align*}
D_{\text{KL}}(\bm{p}_i^\star, \bm{q}_i) &= \sum_{j \in N^+_i} p^\star_{ij} \log \frac{p^\star_{ij}}{q_{ij}}, \\
D_{\text{FR}}(\bm{p}_i^\star, \bm{q}_i) &= \frac{1}{\vert N^+_i \vert^2} \sum_{j \in N^+_i} \vert \sigma^\star_i(j) - \hat{\sigma}_i(j) \vert,
\end{align*}
where $\sigma^\star_i$ (respectively $\hat{\sigma}_i$) is the ranking of elements in $N^+_i$ by decreasing order of $p^\star_{ij}$ (respectively $q_{ij}$).
We report the distribution of errors ``over choices'', i.e., the error at each node $i$ is weighted by the number of outgoing transitions $c^+_i$.

\subsubsection{Clickstream Data}

\paragraph{Wikipedia}
The Wikimedia Foundation has a long history of publicly sharing aggregate, page-level web traffic data\footnote{See: \url{https://stats.wikimedia.org/}.}.
Recently, it also released clickstream data from the English version of Wikipedia \citep{wulczyn2016wikipedia}, providing us with essential ground-truth transition-level data.
We consider a dataset that contains information, extracted from the server logs, about the traffic each page of the English Wikipedia received during the month of March 2016.
Each page's incoming traffic is grouped by HTTP referrer, i.e., by the page visited prior to the request.
We ignore the traffic generated by external Web sites such as search engines and keep only the internal traffic (\num{18}\% of the total traffic in the dataset).
In summary, we obtain counts of transitions on the hyperlink graph of English Wikipedia articles.
The graph contains $n = \num{2316032}$ nodes and $m = \num{13181698}$ edges, and we consider slightly over \num{1.2} billion transitions over the edges.
On this dataset, ChoiceRank converges after \num{795} iterations.

\paragraph{Kosarak}
We also consider a second clickstream dataset from a Hungarian online news portal\footnote{The data is publicly available at \url{http://fimi.ua.ac.be/data/}.}.
The data consists of $\num{7029013}$ transitions on a graph containing $n = 41001$ nodes and $m = \num{974560}$ edges.
ChoiceRank converges after \num{625} iterations.

\begin{figure*}[t]
  \centering
  \includegraphics[width=\linewidth]{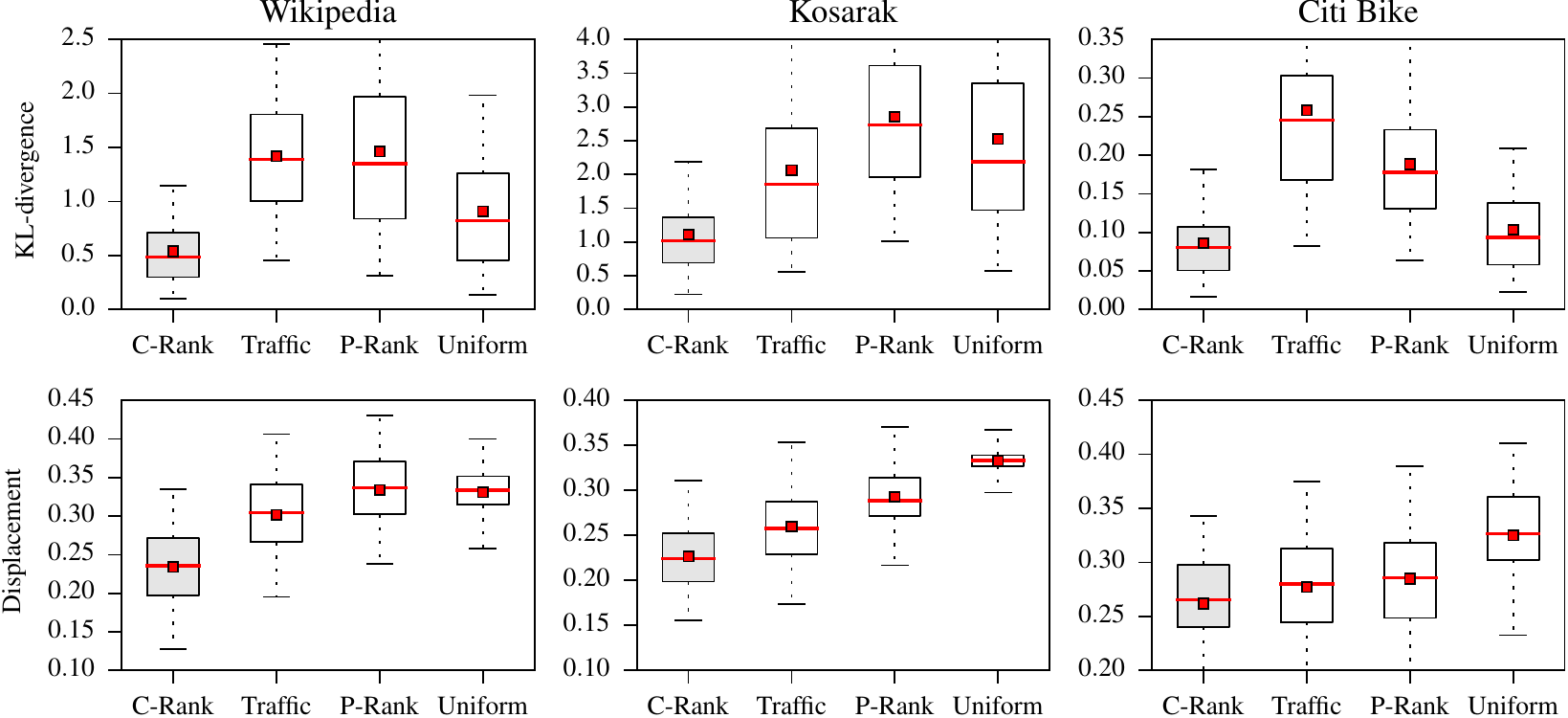}
  \caption{
Error distributions of the network choice model and three baselines for the Wikipedia (WP) and Citi Bike (CB) datasets.
The boxes show the interquartile range, the whiskers show the $5^{\text{th}}$ and $95^{\text{th}}$ percentiles, the red horizontal bars show the median and the red squares show the mean.
}
  \label{fig:results}
\end{figure*}

\begin{figure}[t]
  \centering
  \includegraphics[width=0.6\linewidth]{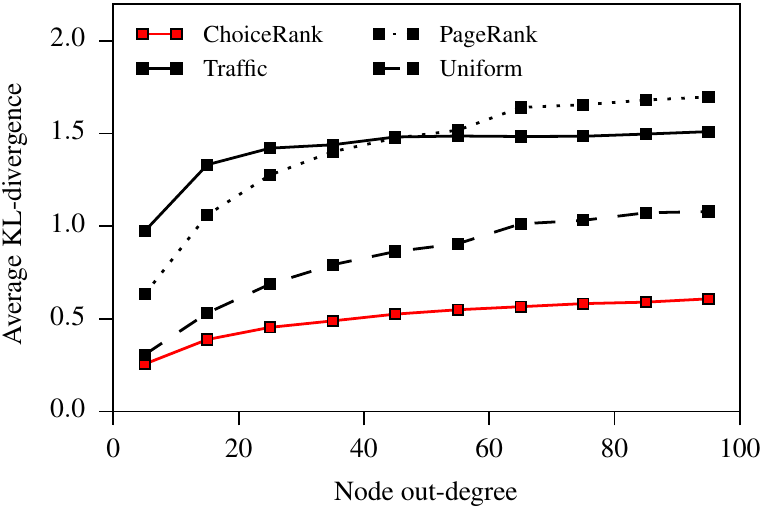}
  \caption{
Average KL-divergence as a function of the number of possible transitions for the Wikipedia dataset.
ChoiceRank performs comparatively better in the case where a node's out-degree is large.
}
  \label{fig:wpdegs}
\end{figure}

The four leftmost plots of Figure~\ref{fig:results} show the error distributions.
ChoiceRank significantly improves on the baselines, both in terms of KL-divergence and rank displacement.
These results give compelling evidence that transitions do not occur proportionally with the target's page traffic: in terms of KL-divergence, ChoiceRank improves on Traffic by a factor $3\times$ and $2\times$, respectively.
PageRank scores, while reflecting some notion of importance of a page, are not designed to estimate transitions, and understandably the corresponding baseline performs poorly.
Uniform (perhaps the simplest of our baselines) is (by design) unable to distinguish among transitions, resulting in a large displacement error.
We believe that its comparatively better performance in terms of KL-divergence (for Wikipedia) is mostly an artifact of the metric, which encourages ``prudent'' estimates.
Finally, in Figure~\ref{fig:wpdegs} we observe that ChoiceRank seems to perform comparatively better as the number of possible transition increases.

\subsubsection{NYC Bicycle-Sharing Data}

Next, we consider trip data from Citi Bike, New York City's bicycle-sharing system\footnote{The data is available at \url{https://www.citibikenyc.com/system-data}.}.
For each ride on the system made during the year 2015, we extract the pick-up and drop-off stations and the duration of the ride.
Because we want to focus on direct trips, we exclude rides that last more than one hour.
We also exclude source-destinations pairs which have less than 1 ride per day on average (a majority of source-destination pairs appears at least once in the dataset).
The resulting data consists of \num{3.4} million rides on a graph containing $n = \num{497}$ nodes and $m = \num{5209}$ edges.
ChoiceRank converges after $\num{7508}$ iterations.
We compute the error distribution in the same way as for the clickstream datasets.

The two rightmost plots of Figure~\ref{fig:results} display the results.
The observations made on the clickstream datasets carry over to this mobility dataset, albeit to a lesser degree.
A significant difference between clicking a link and taking a bicycle trip is that in the latter case, there is a non-uniform ``cost'' of a transition due to the distance between source and target.
In future work, one might consider incorporating edge weights and using the weighted network choice model presented in Appendix~\ref{app:extensions}.

\subsection{Scaling ChoiceRank to Billions of Nodes}

To demonstrate ChoiceRank's scalability, we develop a simple implementation in the Rust programming language, based on the ideas of COST \citep{mcsherry2015scalability}.
Our code is publicly available online\footnote{See: \url{http://lucas.maystre.ch/choicerank}.}.
The implementation repeatedly streams edges from disk and keeps four floating-point values per node in memory:
the counts $c^-_i$ and $c^+_i$, the sum of messages $z_i$, and either $\gamma_i$ or $\lambda_i$ (depending on the stage in the iteration).
As edges can be processed in any order, it can be beneficial to reorder the edges in a way that accelerates the computation.
For this reason, our implementation preprocesses the list of edges and reorders them in Hilbert curve order\footnote{A Hilbert space-filling curve visits all the entries of the adjacency matrix of the graph, in a way that preserves locality of both source and destination of the edges.}.
This results in better cache locality and yields a significant speedup.

We test our implementation on a hyperlink graph extracted from the 2012 Common Crawl web corpus\footnote{
The data is available at \url{http://webdatacommons.org/hyperlinkgraph/}.} that contains over \num{3.5} billion nodes and \num{128} billion edges \citep{meusel2014graph}.
The edge list alone requires about $1$ TB of uncompressed storage.
There is no publicly available information on the traffic at each page, therefore we generate a value $c_i$ for every node $i$ randomly and uniformly between \num{100} and \num{500}, and set both $c^-_i$ and $c^+_i$ to $c_i$.
As such, this experiment does not attempt to measure the validity of the model (unlike the experiments of Section~\ref{sec:accuracy}).
Instead, it focuses on testing the algorithm's potential to scale to to very large networks.

\paragraph{Results.}
We run \num{20} iterations of ChoiceRank on a dual Intel Xeon E5-2680 v3 machine, with \num{256} GB of RAM and \num{6} HDDs configured in RAID 0.
We arbitrarily set $\alpha = 2.0$ and $\beta = 1.0$ (but this choice has no impact on the results).
Only about \num{65} GB of memory is used, all to store the nodes' state ($4 \times 4$ bytes per node).
The algorithm takes a little less than \num{39} minutes per iteration on average.
Collectively, these results validate the feasibility of model inference for very large datasets.

It is worth noting that despite tackling different problems, the ChoiceRank algorithm exhibits interesting similarities with a message-passing implementation of PageRank commonly used in scalable graph-parallel systems such as Pregel \citep{malewicz2010pregel} and Spark \citep{gonzalez2014graphx}.
For comparison, using the COST code \citep{mcsherry2015scalability} we run \num{20} iterations of PageRank on the same hardware and data.
PageRank uses slightly less memory (about \num{50} GB, or one less floating-point number per node) and takes about half of the time per iteration (a little over \num{20} minutes).
This is consistent with the fact that ChoiceRank requires two passes over the edges per iteration, whereas PageRank requires one.
The similarities between the two algorithms lead us to believe that in general, ChoiceRank can benefit from any new system optimizations developed for PageRank.

\section{Conclusion}  
\label{sec:conclusion}

In this paper, we present a method that tackles the problem of finding the transition probabilities along the edges of a network, given only the network's structure and aggregate node-level traffic data.
This method generalizes and extends ideas recently presented by \citet{kumar2015inverting}.
We demonstrate that in spite of the strong model assumptions needed to learn $O(n^2)$ probabilities from $O(n)$ observations, the method still manages to recover the transition probabilities to a good level of accuracy on two clickstream datasets, and shows promise for applications beyond clickstream data.
To sum up, we believe that our method will be useful to pracitioners interested in understanding patterns of navigation in networks from aggregate traffic data, commonly available, e.g., in public datasets.

\paragraph{Acknowledgments.}
We thank Holly Cogliati-Bauereis, Ksenia Konyushkova, Brunella Spinelli and anonymous reviewers for careful proofreading and helpful comments.

\appendix
\section{Extensions and Proofs}  
\label{app:extensions}

In this section, we start by generalizing the network choice model to account for edge weights.
Then, we present formal proofs for
\begin{enuminline}
\item the (minimal) sufficiency of marginal counts and
\item the well-posedness of MAP inference
\end{enuminline}
in the generalized weighted network choice model.

\subsection{Generalization of the Model}

Let $G = (V, E)$ be a weighted, directed graph with edge weights $w_{ij} > 0$ for all $(i, j) \in E$.
\citet{kumar2015inverting} propose the following generalization of Luce's choice model.
Given a parameter vector $\bm{\lambda} \in \mathbf{R}_{>0}^n$, they define the choice probabilities as
\begin{align}
\label{eq:wsinglelik}
p_{ij} = \frac{w_{ij} \lambda_j}{\sum_{k \in N^+_i} w_{ik} \lambda_k}, \quad j \in N^+_i.
\end{align}
We refer to this model as the \emph{weighted network choice model}.
Intuitively, the strength of each alternative is weighted by the corresponding edge's weight;
Luce's original choice model is obtained by setting $w_{ij} = \text{constant}$.
In this general model, the log-likelihood becomes
\begin{align}
\ell(\bm{\lambda} ; \mathcal{D})
    &= \sum_{(i,j) \in E} c_{ij} \bigg[ \log w_{ij} \lambda_j - \log \sum_{k \in N^+_i} w_{ik} \lambda_k \bigg] \nonumber \\
    &= \sum_{(i,j) \in E} c_{ij} \bigg[ \log \lambda_j - \log \sum_{k \in N^+_i} w_{ik} \lambda_k \bigg] \nonumber
       + \sum_{(i,j) \in E} c_{ij} \log w_{ij}, \nonumber \\
    &= \sum_{i = 1}^n \bigg[ c^-_i \log \lambda_i - c^+_i \log\!\sum_{k \in N^+_i}\!w_{ik} \lambda_k \bigg] + \kappa_1, \label{eq:wloglik}
\end{align}
where $c^-_i = \sum_{j \in N^-_i} c_{ji}$ and $c^+_i = \sum_{j \in N^+_i} c_{ij}$ is the aggregate number of transitions arriving in and originating from $i$, respectively.
Note that for every $i$, the weights $\{ w_{ij} \mid j \in N^+_i \}$ are equivalent up to rescaling.

This generalization is relevant in situations where the current context modulates the alternatives' strength.
For example, this could be used to take into account the position or prominence of a link on a page in a hyperlink graph, or the distance between two locations in a mobility network.

\subsection{Minimal Sufficiency of Marginal Counts}

Recall that $c_{ij}$ denotes the number of times we observe a transition from $i$ to $j$.
We set out to prove the following theorem for the weighted network choice model.

\begin{theorem}
Let $c^-_i = \sum_{j \in N^-_i} c_{ji}$ and $c^+_i = \sum_{j \in N^+_i} c_{ij}$ be the aggregate number of transitions arriving in and originating from $i$, respectively.
Then, $\{ (c^-_i, c^+_i) \mid i \in V \}$ is a minimally sufficient statistic for the parameter $\bm{\lambda}$ in the weighted network choice model.
\end{theorem}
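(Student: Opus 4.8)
The plan is to prove sufficiency and minimal sufficiency separately, using the factorization theorem for the first part and the standard characterization via likelihood ratios for the second. For \textbf{sufficiency}, I would start from the likelihood (not just the log-likelihood) of the weighted network choice model. Exponentiating~\eqref{eq:wloglik}, the likelihood factors as
\begin{align*}
p(\mathcal{D} \mid \bm{\lambda})
  = \underbrace{\left(\prod_{i=1}^n \lambda_i^{c^-_i} \Big( \sum_{k \in N^+_i} w_{ik}\lambda_k \Big)^{-c^+_i}\right)}_{g(T(\mathcal{D}),\,\bm{\lambda})}
    \cdot \underbrace{\left(\prod_{(i,j)\in E} \binom{\cdot}{\cdot}^{?} w_{ij}^{c_{ij}}\right)}_{h(\mathcal{D})},
\end{align*}
where $T(\mathcal{D}) = \{(c^-_i, c^+_i) \mid i \in V\}$ and $h(\mathcal{D})$ collects all factors not depending on $\bm{\lambda}$ (the $w_{ij}^{c_{ij}}$ terms and any combinatorial/multinomial coefficients arising from the exact sampling model for the counts). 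The key observation is that $g$ depends on the data only through the $2n$ marginal counts $c^-_i$ and $c^+_i$, so by the Fisher--Neyman factorization theorem $T(\mathcal{D})$ is sufficient. This step is essentially just a careful re-reading of the log-likelihood derivation already carried out in the excerpt, so I do not expect difficulty here.

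For \textbf{minimal sufficiency}, I would invoke the standard criterion: a statistic $T$ is minimally sufficient if, for any two data sets $\mathcal{D}, \mathcal{D}'$, the ratio $p(\mathcal{D}\mid\bm{\lambda})/p(\mathcal{D}'\mid\bm{\lambda})$ is constant in $\bm{\lambda}$ \emph{if and only if} $T(\mathcal{D}) = T(\mathcal{D}')$. The ``if'' direction is immediate from sufficiency. For the ``only if'' direction, suppose the ratio is constant in $\bm{\lambda}$; from the factorization this means
\begin{align*}
\prod_{i=1}^n \lambda_i^{\,c^-_i - c'^-_i} \Big( \sum_{k \in N^+_i} w_{ik}\lambda_k \Big)^{-(c^+_i - c'^+_i)} = \text{const}
\end{align*}
for all $\bm{\lambda} \in \mathbf{R}^n_{>0}$. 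Taking logarithms, this is an identity of the form $\sum_i (c^-_i - c'^-_i)\log\lambda_i = \sum_i (c^+_i - c'^+_i)\log\big(\sum_{k \in N^+_i} w_{ik}\lambda_k\big) + \text{const}$. I would then argue that the functions $\{\log\lambda_i\}_{i}$ together with $\{\log(\sum_{k \in N^+_i} w_{ik}\lambda_k)\}_i$ are, up to the single additive relation coming from homogeneity, linearly independent as functions on $\mathbf{R}^n_{>0}$ --- so all coefficients must vanish, forcing $c^-_i = c'^-_i$ and $c^+_i = c'^+_i$ for every $i$, i.e.\ $T(\mathcal{D}) = T(\mathcal{D}')$.

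The \textbf{main obstacle} is precisely establishing that linear independence. The subtlety is that the second family is \emph{not} independent of the first: by Euler's identity for homogeneous functions, $\log(\sum_k w_{ik}\lambda_k)$ behaves additively under the global scaling $\bm{\lambda}\mapsto c\bm{\lambda}$ exactly as $\log\lambda_i$ does, which reflects the well-known fact that $\bm{\lambda}$ is only identified up to scale. I would handle this by examining asymptotic behavior: send $\lambda_i \to 0$ or $\lambda_i \to \infty$ along carefully chosen coordinate directions (or restrict to the simplex $\sum_i \lambda_i = 1$ to quotient out the scaling), and compare the resulting growth rates of the two sides of the log-identity. Because $\log(\sum_{k\in N^+_i} w_{ik}\lambda_k)$ grows like $\log\lambda_k$ only when $k$ is the dominant coordinate in $N^+_i$, a perturbation argument pinning down one coordinate at a time should isolate each coefficient $c^-_i - c'^-_i$ and $c^+_i - c'^+_i$ and force it to zero. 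Once that combinatorial/asymptotic argument is in place, the conclusion follows immediately. (One should also note up front, as the remark after~\eqref{eq:wloglik} does, that the edge weights $\{w_{ij}\}$ are treated as fixed and known, so they play no role in the parameter identification; and strictly, identifiability is only up to the global rescaling of $\bm{\lambda}$, which is the expected and harmless ambiguity.)
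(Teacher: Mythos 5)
Your route is essentially the paper's: the paper also proves minimal sufficiency directly via the likelihood-ratio characterization (Theorem~6.2.13 of Casella and Berger), writes the log-ratio of the densities of two datasets using~\eqref{eq:wloglik} as $\sum_{i} \big[ (c^-_i - d^-_i)\log\lambda_i - (c^+_i - d^+_i)\log\sum_{k\in N^+_i}w_{ik}\lambda_k \big] + \kappa$, and concludes. Your separate Fisher--Neyman step is redundant (minimal sufficiency subsumes sufficiency) but harmless. The one substantive difference is that you correctly identify the ``only if'' direction---that a $\bm{\lambda}$-independent log-ratio forces every coefficient to vanish---as the real content of the theorem; the paper dismisses exactly this step with ``it is easy to see.''

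That step is where your proposal has a genuine gap: the family $\{\log\lambda_i\}_i \cup \{\log\sum_{k\in N^+_i}w_{ik}\lambda_k\}_i$ is \emph{not}, for a general graph, linearly independent modulo only the homogeneity relation, so the linear-independence lemma you plan to prove is false as stated. Two structural degeneracies break it. First, if node $i$ has out-degree one, say $N^+_i=\{j\}$, then $\log\sum_{k\in N^+_i}w_{ik}\lambda_k=\log w_{ij}+\log\lambda_j$ exactly; replacing $c_{ij}$ by $c_{ij}+1$ changes the log-likelihood by the constant $-\log w_{ij}$, so two datasets with different $(c^-_j,c^+_i)$ but equal $c^-_j-c^+_i$ have a $\bm{\lambda}$-independent ratio, and the statistic is sufficient but not minimal. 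Second, if $N^+_{i_1}=N^+_{i_2}$ with proportional weights, the likelihood depends on $(c^+_{i_1},c^+_{i_2})$ only through $c^+_{i_1}+c^+_{i_2}$, and one can exhibit two valid sets of edge counts with identical in-marginals and identical sum but different individual $c^+_{i_1},c^+_{i_2}$. Your asymptotic argument does go through under implicit non-degeneracy assumptions (every out-degree at least two, and no two out-neighborhoods inducing proportional linear forms $\sum_k w_{ik}\lambda_k$): sending $\lambda_j\to 0$ kills each $p_{ij}$ and isolates $c^-_j-d^-_j$, and pairwise non-proportionality of the linear forms then isolates each $c^+_i-d^+_i$ because reciprocals of non-proportional linear forms are linearly independent. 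But without stating those conditions the key lemma is both unproved and untrue. To be fair, the paper's proof silently makes the same assumption, so you have located a real weakness rather than introduced one---you just have not closed it.
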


\begin{proof}
Let $f(\{ c_{ij} \} \mid \bm{\lambda})$ be the discrete probability density function of the data under the model with parameters $\bm{\lambda}$.
Theorem $6.2.13$ in \citet{casella2002statistical} states that $\{ (c^-_i, c^+_i) \}$ is a minimally sufficient statistic for $\bm{\lambda}$ if and only if, for any $\{ c_{ij} \}$ and $\{ d_{ij} \}$ in the support of $f$,
\begin{align}
\label{eq:minsuff}
\begin{aligned}
\frac{ f(\{ c_{ij} \} \mid \bm{\lambda}) }{ f(\{ d_{ij} \} \mid \bm{\lambda}) }\ \text{is independent of $\bm{\lambda}$}
\iff (c^-_i, c^+_i) = (d^-_i, d^+_i) \quad \forall i.
\end{aligned}
\end{align}
Taking the log of the ratio on the left-hand side and using~\eqref{eq:wloglik}, we find that
\begin{align*}
\log \frac{ f(\{ c_{ij} \} \mid \bm{\lambda}) }{ f(\{ d_{ij} \} \mid \bm{\lambda}) } =
  \sum_{i = 1}^n \bigg[ (c^-_i\!-\!d^-_i) \log \lambda_i
                       - (c^+_i\!-\!d^+_i) \log\!\sum_{k \in N^+_i}\!w_{ik} \lambda_k \bigg] + \kappa_2.
\end{align*}
From this, it is easy to see that the ratio of densities is independent of $\bm{\lambda}$ if and only if $c^-_i = d^-_i$ and $c^+_i = d^+_i$, which verifies~\eqref{eq:minsuff}.
\end{proof}

\subsection{Well-Posedness of MAP Inference}

Using a $\text{Gamma}(\alpha, \beta)$ prior for each parameter, the log-posterior of the weighted network choice model can be written as
\begin{align}
\label{eq:wlogpost}
\begin{aligned}
&\log p(\bm{\lambda} \mid \mathcal{D}) =
    \sum_{i = 1}^n \bigg[ (c^-_i + \alpha - 1) \log \lambda_i
        - c^+_i \log \bigg( \sum_{k \in N^+_i} w_{ik} \lambda_k \bigg)  - \beta \lambda_i \bigg]
    + \kappa_3.
\end{aligned}
\end{align}
We prove a theorem that guarantees that MAP estimation is well-posed in this generalized model; the proof of Theorem~\ref{thm:map} follows trivially.

\begin{theorem}
\label{thm:wmap}
If i.i.d. $\lambda_1, \ldots, \lambda_n \sim \text{Gamma}(\alpha, \beta)$ with $\alpha > 1$, then there exists a unique maximizer $\bm{\lambda}^\star \in \mathbf{R}^n_{>0}$ of the weighted network choice model's log-posterior~\eqref{eq:wlogpost}.
\end{theorem}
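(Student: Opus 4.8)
The plan is to prove the two assertions separately: uniqueness via strict concavity after the reparametrization $\lambda_i = e^{\theta_i}$, and existence via a compactness argument. First I would substitute $\lambda_i = e^{\theta_i}$, which is a bijection from $\mathbf{R}^n_{>0}$ onto $\mathbf{R}^n$ and turns the log-posterior~\eqref{eq:wlogpost} into
\[
  g(\bm{\theta}) = \sum_{i=1}^n \Big[ (c^-_i + \alpha - 1)\,\theta_i - c^+_i \log\Big( \sum_{k \in N^+_i} w_{ik}\, e^{\theta_k} \Big) - \beta\, e^{\theta_i} \Big] + \kappa_3 .
\]
Each $\log$-term is a weighted log-sum-exp, hence convex in $\bm{\theta}$, so after multiplication by $-c^+_i \le 0$ it is concave; the terms $(c^-_i + \alpha - 1)\theta_i$ are affine; and each $-\beta e^{\theta_i}$ is strictly concave since $\beta > 0$. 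Thus $g$ is a sum of concave functions and the coordinatewise strictly concave term $-\beta \sum_i e^{\theta_i}$, hence strictly concave on $\mathbf{R}^n$, so it has at most one maximizer.

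For existence, the key observation is that the likelihood contribution is bounded above regardless of parametrization: from~\eqref{eq:wsinglelik} the choice probabilities satisfy $p_{ij} \in (0,1]$, so $\ell(\bm{\lambda};\mathcal{D}) = \sum_{(i,j)\in E} c_{ij} \log p_{ij} \le 0$ because $c_{ij} \ge 0$. Combining this with the decomposition of~\eqref{eq:wlogpost} into likelihood plus Gamma log-prior (using~\eqref{eq:wloglik}), and writing $\psi(t) = (\alpha-1)\log t - \beta t$, we obtain $\log p(\bm{\lambda}\mid\mathcal{D}) \le \sum_{i=1}^n \psi(\lambda_i) + \kappa'$ for a constant $\kappa'$. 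Since $\alpha > 1$ and $\beta > 0$, the function $\psi$ is bounded above on $(0,\infty)$ with $\psi(t) \to -\infty$ both as $t \to 0^+$ and as $t \to +\infty$; hence every super-level set $\{ \bm{\lambda} \in \mathbf{R}^n_{>0} : \log p(\bm{\lambda}\mid\mathcal{D}) \ge v \}$ is bounded and stays away from the boundary of $\mathbf{R}^n_{>0}$, i.e.\ is a compact subset of $\mathbf{R}^n_{>0}$. As $\log p(\cdot\mid\mathcal{D})$ is continuous, it attains its maximum on such a set; pulling this back through $\bm{\lambda} = e^{\bm{\theta}}$ and invoking the strict concavity of $g$ yields the unique maximizer $\bm{\lambda}^\star \in \mathbf{R}^n_{>0}$. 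Theorem~\ref{thm:map} is then the special case of constant weights $w_{ij}$.

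I expect the coercivity step to be the main obstacle. A direct attempt to show $g(\bm{\theta}) \to -\infty$ as $\|\bm{\theta}\| \to \infty$ fails: the log-sum-exp terms blow up to $+\infty$ whenever all parameters in some out-neighborhood drift to $-\infty$, so $g$ is \emph{not} coercive in the naive sense, and one would have to argue carefully which coordinates can carry that blow-up. The trick that makes the argument clean is the parametrization-free bound $\ell \le 0$, which removes the likelihood term from the bookkeeping and leaves only the well-behaved separable prior sum $\sum_i \psi(\lambda_i)$ to control. The remaining care is in checking that the super-level sets stay bounded away from $\partial\mathbf{R}^n_{>0}$, which is precisely where the hypothesis $\alpha > 1$ enters: for $\alpha = 1$ a maximizer may fail to exist in the open orthant, and for $\alpha < 1$ the log-posterior is unbounded above near the boundary.
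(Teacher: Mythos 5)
Your proposal is correct and follows essentially the same route as the paper: uniqueness via strict concavity of the log-posterior under the reparametrization $\lambda_i = e^{\theta_i}$ (log-sum-exp concavity plus the strictly concave prior term $-\beta e^{\theta_i}$), and existence by bounding the likelihood contribution above by a constant so that the separable Gamma log-prior terms $(\alpha-1)\log\lambda_i - \beta\lambda_i$ force the log-posterior to $-\infty$ at the boundary and at infinity. Your observation that $\ell \le 0$ because it is a nonnegative combination of log-probabilities is a cleaner packaging of the paper's explicit upper bound on the rewritten likelihood, but it is the same idea, and both arguments rely in the same way on the marginal counts being realizable by edge-level transition counts.
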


\begin{proof}
The log-posterior~\eqref{eq:wlogpost} is not concave in $\bm{\lambda}$, but it can be made concave using the simple reparametrization $\lambda_i = e^{\theta_i}$.
Under this reparametrization, the log-prior and the log-likelihood become
\begin{align*}
\log p(\bm{\theta})
    &= \sum_{i = 1}^n \left[ (\alpha - 1) \theta_i - \beta e^{\theta_i} \right] + \kappa_4, \\
\ell(\bm{\theta} ; \mathcal{D})
    &= \sum_{i = 1}^n \bigg[ c^-_i \theta_i - c^+_i \log \sum_{k \in N^+_i} w_{ik} e^{\theta_k} \bigg] + \kappa_5.
\end{align*}
It is easy to see that the log-likelihood is concave and the log-prior strictly concave in $\bm{\theta}$.
As a result, the log-posterior is strictly concave in $\bm{\theta}$, which ensures that there exists at most one maximizer.

Now consider any transition counts $\{ c_{ij} \}$ that satisfy $c^-_i = \sum_{j \in N^-_i} c_{ji}$ and $c^+_i = \sum_{j \in N^+_i} c_{ij}$.
The log-posterior can be written as
\begin{align*}
\log p(\bm{\theta} \mid \mathcal{D})
    &= \sum_{i = 1}^n \sum_{j \in N^+_i} c_{ij} \bigg[ \theta_j - \log \sum_{k \in N^+_i} w_{ik} e^{\theta_k} \bigg]
       + \sum_{i = 1}^n \left[ (\alpha - 1) \theta_i - \beta e^{\theta_i} \right] + \kappa_3\\
    &\le -n^2 \cdot \max_{i,j} \log w_{ij}
       + \sum_i^n \left[ (\alpha - 1) \theta_i - \beta e^{\theta_i} \right] + \kappa_3.
\end{align*}
For $\alpha > 1$, it follows that $\lim_{\lVert \bm{\theta} \rVert \to \infty} \log p(\bm{\theta} \mid \mathcal{D}) = -\infty$, which ensures that there is at least one maximizer.
\end{proof}

Note that Theorem~\ref{thm:wmap} can easily be extended to independent but non-identical Gamma priors, where $\lambda_i \sim \text{Gamma}(\alpha_i, \beta_i)$ and $\alpha_i \ne \alpha_j$, $\beta_i \ne \beta_j$ in general.

\section{Maximum-Likelihood Estimation}  
\label{app:maxlik}

In this section, we go into the analysis of the ML estimator in depth.
From the definition of choice probabilities in~\eqref{eq:wsinglelik}, it is clear that the likelihood is invariant to a rescaling of the parameters, i.e., $\ell(\bm{\lambda}; \mathcal{D}) = \ell(s \bm{\lambda}; \mathcal{D})$ for any $s > 0$.
We will therefore identify parameters up to rescaling.

\subsection{Necessary and Sufficient Conditions}

In order to provide a data-dependent, necessary and sufficient condition that guarantees that the ML estimate is well-defined, we extend the definition of comparison hypergraph presented in Section~\ref{sec:maxlik}.

\begin{definition}[Comparison graph]
Let $G = (V, E)$ be a directed graph and $\{ a_{ij} \mid (i,j) \in E \}$ be non-negative numbers.
The \emph{comparison graph} induced by $\{ a_{ij} \}$ is the directed graph $H = (V, E')$, where $(i,j) \in E'$ if and only if there is a node $k$ such that $i, j \in N^+_k$ and $a_{kj} > 0$.
\end{definition}

The numbers $\{ a_{ij}\}$ can be loosely interpreted as transition counts (although they do not need to be integer).
Intuitively, there is an edge $(i, j)$ in the comparison graph whenever there is at least one instance in which $i$ and $j$ were among the alternatives and $j$ was selected.
If $a_{ij} > 0$ for all edges, then the comparison graph is equivalent to its hypergraph counterpart, in that every hyperedge induces a clique in the comparison graph.
As shown by the next theorem, the notion of (data-dependent) comparison graph leads to a precise characterization of whether the ML estimate is well-defined or not.

\begin{theorem}
\label{thm:mlboth}
Let $G = (V, E)$ be a directed graph and $\{ (c^-_i, c^+_i) \}$ be the aggregate number of transitions arriving in and originating from $i$, respectively.
Let $\{ a_{ij} \}$ be any set of non-negative real numbers that satisfy
\begin{align*}
\sum_{j \in N^-_i} a_{ji} = c^-_i, \quad
\sum_{j \in N^+_i} a_{ij} = c^+_i.
\end{align*}
Then, the maximizer of the log-likelihood~\eqref{eq:wloglik} exists and is unique (up to rescaling) if and only if the comparison graph induced by $\{ a_{ij} \}$ is strongly connected.
\end{theorem}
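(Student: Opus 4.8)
The plan is to reparametrize via $\lambda_i = e^{\theta_i}$, so that the log-likelihood $\ell(\bm{\theta};\mathcal{D})$ of \eqref{eq:wloglik} becomes concave, and identify parameters up to the translation $\bm{\theta} \mapsto \bm{\theta} + c\bm{1}$ (which corresponds to rescaling $\bm{\lambda}$). Since a concave function attains a unique-up-to-translation maximizer precisely when it is bounded above and its superlevel sets are bounded modulo the translation direction, the whole statement reduces to characterizing when $\ell(\bm{\theta};\mathcal{D})$ is bounded above and has compact superlevel sets in the quotient space $\mathbf{R}^n / \mathbf{R}\bm{1}$. The key technical tool is to pick any feasible disaggregation $\{a_{ij}\}$ with the prescribed marginals $c^-_i, c^+_i$ and write, using \eqref{eq:wloglik} together with the marginal constraints,
\begin{align*}
\ell(\bm{\theta};\mathcal{D}) = \sum_{k} \sum_{i,j \in N^+_k} a_{kj}\,\big[\theta_j - \log\textstyle\sum_{\ell \in N^+_k} w_{k\ell} e^{\theta_\ell}\big] + \kappa,
\end{align*}
so that $\ell$ is a sum, over ordered pairs $(i,j)$ with $a_{kj} > 0$ for some common parent $k$ — i.e., over directed edges of the comparison graph induced by $\{a_{ij}\}$ — of terms each of which is a bounded-above, concave function of $\theta_j - \theta_i$ that tends to $-\infty$ as $\theta_j - \theta_i \to -\infty$ and is bounded as $\theta_j - \theta_i \to +\infty$.

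For the "if" direction (strong connectivity $\Rightarrow$ well-defined ML estimate): I would first argue boundedness above termwise. For coercivity modulo $\bm{1}$, suppose a sequence $\bm{\theta}^{(t)}$ (normalized, say, so that $\max_i \theta^{(t)}_i = 0$) escapes to infinity; then $\min_i \theta^{(t)}_i \to -\infty$. Partition the vertices at stage $t$ by the "level" of $\theta^{(t)}_i$; because the comparison graph is strongly connected, there is always a directed edge $(i,j)$ crossing from a higher-valued block to a strictly lower-valued one, and the corresponding term $a_{kj}[\theta_j - \log\sum w e^\theta]$ contains the summand that behaves like $a_{kj}(\theta_j - \theta_i) \to -\infty$ (more carefully: $\log\sum_\ell w_{k\ell} e^{\theta_\ell} \ge \log w_{ki} + \theta_i$, so the bracket is $\le \theta_j - \theta_i - \log w_{ki} \to -\infty$). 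Since all other terms are bounded above, $\ell(\bm{\theta}^{(t)}) \to -\infty$, contradicting that the maximum is finite; hence superlevel sets are compact modulo $\bm{1}$ and a maximizer exists. Uniqueness-up-to-rescaling then follows from strict concavity of $\ell$ on the quotient — which itself follows from the same edge-connectivity argument applied to the strict-concavity of $t \mapsto -\log(e^{t}+\cdots)$ along any direction not parallel to $\bm{1}$.

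For the "only if" direction: if the comparison graph induced by $\{a_{ij}\}$ is not strongly connected, take a terminal strongly connected component (a set $S$ of vertices with no comparison-edge leaving $S$), and consider translating $\theta_i \mapsto \theta_i + c$ for $i \in S$, leaving the rest fixed. Because no comparison-edge goes from $S$ to its complement, and because within $S$ and within $S^c$ the relevant differences $\theta_j - \theta_i$ are unchanged, one checks that every bracket $[\theta_j - \log\sum_\ell w_{k\ell} e^{\theta_\ell}]$ appearing with positive weight is invariant — for edges with both endpoints outside $S$ trivially, for edges with both endpoints in $S$ because numerator and the dominant denominator terms shift together, and there are no positively-weighted edges from $S$ to $S^c$ — so $\ell$ is constant along this one-parameter family, which is not a global rescaling. (One must also handle edges into $S$ from outside; here the bracket is not invariant, but letting $c \to -\infty$ rather than using exact invariance, or more cleanly choosing $S$ to be a terminal component so that mass only flows out of $S^c$ into $S$ and showing the likelihood is non-decreasing, shows the maximizer fails to be attained or fails uniqueness — this is the delicate case). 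The main obstacle I anticipate is precisely this bookkeeping in the "only if" direction: cleanly showing that a non-strongly-connected comparison graph forces either non-existence or non-uniqueness, keeping straight which direction of edges carries the weights $a_{kj}$, and correctly distinguishing "the supremum is not attained" from "the maximizer is not unique" depending on whether the offending component is a source or a sink in the condensation. I would resolve this by always passing to the condensation DAG and arguing about its sources (giving unboundedness, hence non-existence) and about scaling freedom in the remainder (giving non-uniqueness).
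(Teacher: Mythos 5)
Your proposal follows essentially the same route as the paper's proof: reparametrize with $\lambda_i = e^{\theta_i}$, reduce existence to compactness of the super-level sets modulo the all-ones direction (established via the same per-edge bound $\log\sum_{\ell} w_{k\ell} e^{\theta_\ell} \ge \log w_{ki} + \theta_i$ applied along a path of the comparison graph), reduce uniqueness to strict concavity glued together by strong connectivity (the paper makes this precise via Hölder's inequality), and prove the converse by shifting the parameters of a closed component and observing that the likelihood is non-decreasing along that ray, so the maximizer either is not attained or is not unique. The only slip is the display $\sum_k \sum_{i,j \in N^+_k} a_{kj}[\cdots]$, which overcounts each bracket by a factor $|N^+_k|$; the identity that actually follows from the marginal constraints is $\ell(\bm{\theta}) = \sum_k \sum_{j \in N^+_k} a_{kj}\bigl[\theta_j - \log\sum_{\ell \in N^+_k} w_{k\ell} e^{\theta_\ell}\bigr] + \kappa$, but this does not affect the structure or validity of your argument.
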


The proof borrows from \citet{hunter2004mm}, in particular from the proofs of Lemmas~$1$ and~$2$.

\begin{proof}
The log-likelihood~\eqref{eq:wloglik} is not concave in $\bm{\lambda}$, but it can be made concave using the reparametrization $\lambda_i = e^{\theta_i}$.
We can rewrite the reparametrized log-likelihood using $\{ a_{ij} \}$ as
\begin{align*}
    \ell(\bm{\theta})
        = \sum_{i = 1}^n \sum_{j \in N^+_i} a_{ij} \bigg[ \theta_j - \log \sum_{k \in N^+_i} w_{ik} e^{\theta_k} \bigg],
\end{align*}
and, without loss of generality, we can assume that $\sum_i \theta_i = 0$ and $\min_{ij} w_{ij} = 1$.

First, we shall prove that the super-level set $\{ \bm{\theta} \mid \ell(\bm{\theta}) \ge c \}$ is bounded and compact for any $c$, if and only if the comparison graph is strongly connected.
The compactness of all super-level sets ensures that there is at least one maximizer.
Pick any unit vector $\bm{u}$ such that $\sum_i u_i = 0$, and let $\bm{\theta} = s \bm{u}$
When $s \to \infty$, then $e^{\theta_i} > 0$ and $e^{\theta_j} \to 0 $ for some $i$ and $j$.
As the comparison graph is strongly connected, there is a path from $i$ to $j$, and along this path there must be two consecutive nodes $i', j'$ such that $e^{\theta_{i'}} > 0$ and $e^{\theta_{j'}} \to 0$.
The existence of the edge $(i',j')$ in the comparison graph means that there is a $k$ such that $i', j' \in N^+_k$ and $a_{kj'} > 0$.
Therefore, the log-likelihood can be bounded as
\begin{align*}
\ell(\bm{\theta})
    \le a_{kj'} \bigg[ \theta_{j'} - \log \sum_{q \in N^+_k} w_{kq} e^{\theta_q} \bigg]
    \le a_{kj'} \left[ \theta_{j'} - \log (e^{\theta_{j'}} + e^{\theta_{i'}}) \right],
\end{align*}
and $\lim_{s \to \infty} \ell(\bm{\theta}) = -\infty$.
Conversely, suppose that the comparison graph is not strongly connected and partition the vertices into two non-empty subsets $S$ and $T$ such that there is no edge from $S$ to $T$.
Let $c > 0$ be any positive constant, and take $\tilde{\theta}_i = \theta_i + c$ if $i \in S$ and $\tilde{\theta}_i = \theta_i$ if $i \in T$ (renormalize such that $\sum_i \tilde{\theta}_i = 0$).
Clearly, $\ell(\tilde{\bm{\theta}}) \ge \ell(\bm{\theta})$, and by repeating this procedure $\lVert \bm{\theta} \rVert$ may be driven to infinity without decreasing the likelihood.

Second, we shall prove that if the comparison graph is strongly connected, the log-likelihood is strictly concave (in $\bm{\theta}$).
In particular, for any $p \in (0,1)$,
\begin{align}
\label{eq:strictconcav}
\ell \left[ p \bm{\theta} + (1-p) \bm{\eta} \right] \ge p \ell(\bm{\theta}) + (1-p) \ell(\bm{\eta}),
\end{align}
with equality if and only if $\bm{\theta} \equiv \bm{\eta}$ up to a constant shift.
Strict concavity ensures that there is at most one maximizer of log-likelihood.
We start with Hölder's inequality, which implies that, for positive $\{ x_k \}$ and $\{ y_k \}$, and $p \in (0,1)$,
\begin{align*}
\log \sum_k x_k^p y_k^{1-p} \le p \log \sum_k x_k + (1-p) \log \sum_k y_k.
\end{align*}
with equality if and only $x_k = c y_k$ for some $c > 0$.
Letting $x_k = w_{ik} e^{\theta_k}$ and $y_k = w_{ik} e^{\eta_k}$, we find that for all $i$
\begin{align}
\label{eq:holderapp}
\begin{aligned}
\log \sum_{k \in N^+_i} w_{ik} e^{p \theta_k + (1-p) \eta_k}
    \le p \log\!\sum_{k \in N^+_i}\!w_{ik} e^{\theta_k} + (1-p) \log\!\sum_{k \in N^+_i}\!w_{ik} e^{\eta_k},
\end{aligned}
\end{align}
with equality if and only if there exists $c \in \mathbf{R}$ such that $\theta_k = \eta_k + c$ for all $k \in N^+_{i}$.
Multiplying by $a_{ij}$ and summing over $i$ and $j$ on both sides of~\eqref{eq:holderapp} shows that the log-likelihood is concave in $\bm{\theta}$.
Now, consider any partition of the vertices into two non-empty subsets $S$ and $T$.
Because the comparison graph is strongly connected, there is always $k \in V$, $i \in S$ and $j \in T$ such that $i, j \in N^+_k$ and $a_{ki} > 0$.
Therefore, the left and right side of~\eqref{eq:strictconcav} are equal if and only if $\bm{\theta} \equiv \bm{\eta}$ up to a constant shift.

Bounded super-level sets and strict concavity form necessary and sufficient conditions for the existence and uniqueness of the maximizer.
\end{proof}

We now give a proof for Theorem~\ref{thm:mlnecessary}, presented in the main body of text.

\begin{proof}[Proof of Theorem~\ref{thm:mlnecessary}]
If the comparison hypergraph is disconnected, then for any data $\mathcal{D}$, the (data-induced) comparison graph is disconnected too.
Furthermore, the connected components of the comparison graph are subsets of those of the hypergraph.
Partition the vertices into two non-empty subsets $S$ and $T$ such that there is no hyperedge between $S$ to $T$ in the comparison hypergraph.
Let $A = \{ i \mid N^+_i \subset S \}$ and $B = \{ i \mid N^+_i \subset T \}$.
By construction of the comparison hypergraph, $A \cap B = \varnothing$ and $A \cup B = V$.
The log-likelihood can be therefore be rewritten as
\begin{align*}
\ell(\bm{\theta}) =
    \sum_{i \in A} \sum_{j \in N^+_i} a_{ij} \bigg[ \log \lambda_j - \log \sum_{k \in N^+_i} w_{ik} \lambda_k \bigg]
    + \sum_{i \in B} \sum_{j \in N^+_i} a_{ij} \bigg[ \log \lambda_j - \log \sum_{k \in N^+_i} w_{ik} \lambda_k \bigg].
\end{align*}
The sum over $A$ involves only parameters related to nodes in $S$, while the sum over $B$ involves only parameters related to nodes in $T$.
Because the likelihood is invariant to a rescaling of the parameters, it is easy to see that we can arbitrarily rescale the parameters of the vertices in either $S$ or $T$ without affecting the likelihood.
\end{proof}

\begin{figure*}[t]
  \begin{subfigure}{.33\textwidth}
    \centering
    \includegraphics[width=.85\linewidth]{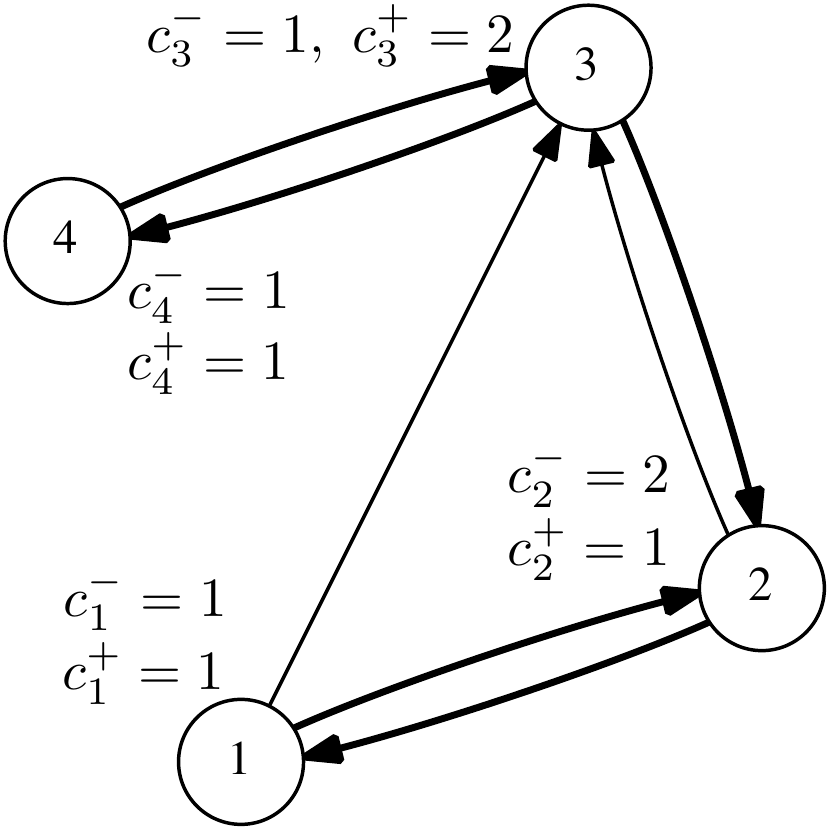}
    \caption{network structure}
  \end{subfigure}%
  \begin{subfigure}{.33\textwidth}
    \centering
    \includegraphics[width=.85\linewidth]{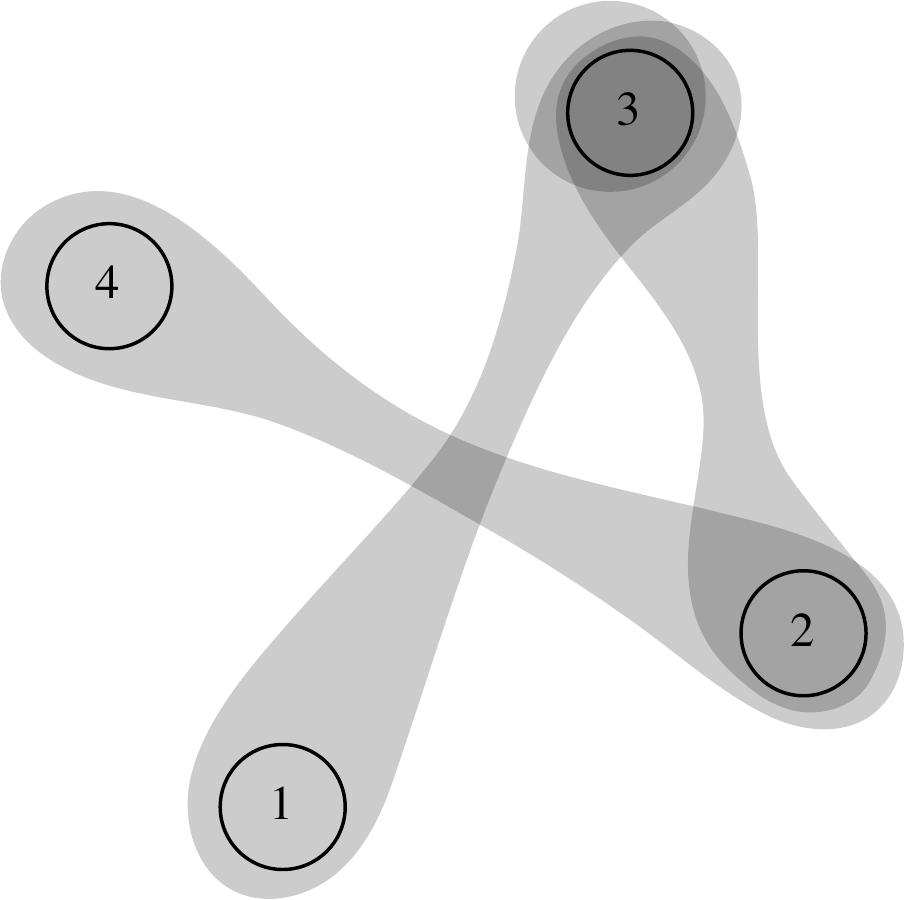}
    \caption{comparison hypergraph}
  \end{subfigure}
  \begin{subfigure}{.33\textwidth}
    \centering
    \includegraphics[width=.85\linewidth]{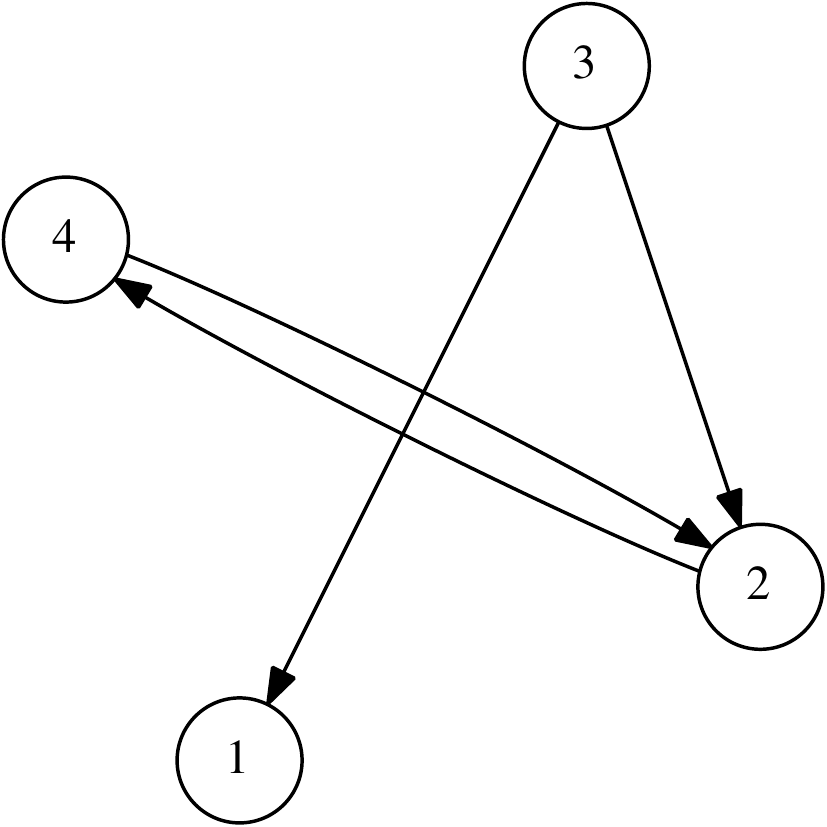}
    \caption{comparison graph}
  \end{subfigure}
  \caption{An innocent-looking example where the ML estimate does not exist.
  The network structure, aggregate traffic data and compatible transitions are shown on the left.
  While the comparison hypergraph is connected, the (data-dependent) comparison graph is not strongly connected.}
  \label{fig:badexample}
\end{figure*}

\paragraph{Verifying the condition of Theorem~\ref{thm:mlboth}.}
In order to verify the necessary and sufficient condition given $\{ (c^-_i, c^+_i) \}$, one has to find a non-negative solution $\{ a_{ij} \}$ to the system of equations
\begin{align*}
\sum_{j \in N^-_i} a_{ji} &= c^-_i, \\
\sum_{j \in N^+_i} a_{ij} &= c^+_i.
\end{align*}
\citet{dines1926positive} presents a remarkably simple algorithm to find such a non-negative solution.
Alternatively, \citet{kumar2015inverting} suggest recasting the problem as one of maximum flow in a network.
However, the computational cost of running \citeauthor{dines1926positive}' or max-flow algorithms is significantly greater than that of running ChoiceRank.

\subsection{Example}

To conclude our discussion, we provide an innocuous-looking example that highlights the difficulty of dealing with the ML estimate.
Consider the network structure and traffic data depicted in Figure~\ref{fig:badexample}.
The network is strongly connected, and its comparison hypergraph is connected as well; as such, the network satisfies the necessary condition stated in Theorem~\ref{thm:mlnecessary} in the main text.
Nevertheless, the condition is not sufficient for the ML-estimate to be well-defined.
In this example, the (data-dependent) comparison graph is \emph{not} strongly connected, and it is easy to see that the likelihood can always be increased by increasing $\lambda_1$, $\lambda_2$ and $\lambda_4$.
Hence, the ML estimate does not exist.

In this simple example, we indicate the edge transitions that generated the observed marginal traffic in bold.
Given this information, the comparison graph is easy to find, and the necessary and sufficient conditions of Theorem~\ref{thm:mlboth} are easy to check.
But in general, finding a set of transitions that is compatible with given marginal per-node traffic data is computationally expensive (see discussion above).

\section{ChoiceRank Algorithm}  
\label{app:algorithm}

In this section, we start by generalizing the ChoiceRank algorithm to the weighted network choice model.
We then prove the convergence of this generalized algorithm.
Finally, we show how the same algorithm can be obtained from an EM viewpoint by introducing suitable latent variables.

\subsection{Algorithm for the Generalized Model}

Using the same linear upper-bound on the logarithm as in Section~\ref{sec:algorithm} of the main text, we can lower-bound the log-posterior~\eqref{eq:wlogpost} in the weighted model by
\begin{align}
\label{eq:wminorizing}
\begin{aligned}
f^{(t)}(\bm{\lambda}) = \kappa_2 + \sum_{i = 1}^n \bigg[
    & (c^-_i + \alpha - 1) \log \lambda_i - \beta \lambda_i \\
    &- c^+_i \bigg( \log\!\sum_{k \in N^+_i}\!w_{ik} \lambda^{(t)}_k
                   +\frac{\sum_{k \in N^+_i}\!w_{ik} \lambda_k}{\sum_{k \in N^+_i}\!w_{ik} \lambda^{(t)}_k} -1 \bigg) \bigg],
\end{aligned}
\end{align}
with equality if and only if $\bm{\lambda} = \bm{\lambda}^{(t)}$.
Starting with an arbitrary $\bm{\lambda}^{(0)} \in \mathbf{R}^n_{>0}$, we repeatedly maximize the lower-bound $f^{(t)}$.
This surrogate optimization problem has a closed form solution, obtained by setting $\nabla f^{(t)}$ to $0$:
\begin{align}
\label{eq:wmmiter}
\lambda_i^{(t + 1)} = \frac{c^-_i + \alpha - 1}{\sum_{j \in N^-_i} w_{ji} \gamma_j^{(t)} + \beta},
    \quad \text{where }
    \gamma_j^{(t)} = \frac{c^+_j}{\sum_{k \in N^+_j} w_{jk} \lambda_k^{(t)}}.
\end{align}
The iterates provably converge to the maximizer of~\eqref{eq:wlogpost}, as shown by the following theorem.

\begin{theorem}
\label{thm:wmmconv}
Let $\bm{\lambda}^\star$ be the unique maximum a-posteriori estimate.
Then for any initial $\bm{\lambda}^{(0)} \in \mathbf{R}^n_{> 0}$ the sequence of iterates defined by~\eqref{eq:wmmiter} converges to $\bm{\lambda}^\star$.
\end{theorem}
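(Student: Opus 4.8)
The plan is to invoke the standard global-convergence theory for MM (equivalently, ascent) algorithms, exactly as suggested in the main text for Theorem \ref{thm:mmconv}. Recall that the iteration \eqref{eq:wmmiter} is defined by $\bm{\lambda}^{(t+1)} = \argmax_{\bm{\lambda}} f^{(t)}(\bm{\lambda})$, where $f^{(t)}$ is the minorizer \eqref{eq:wminorizing} satisfying $f^{(t)}(\bm{\lambda}) \le \log p(\bm{\lambda} \mid \mathcal{D})$ for all $\bm{\lambda} \in \mathbf{R}^n_{>0}$ with equality at $\bm{\lambda} = \bm{\lambda}^{(t)}$. The first step is to record the usual ascent property: $\log p(\bm{\lambda}^{(t+1)} \mid \mathcal{D}) \ge f^{(t)}(\bm{\lambda}^{(t+1)}) \ge f^{(t)}(\bm{\lambda}^{(t)}) = \log p(\bm{\lambda}^{(t)} \mid \mathcal{D})$, so the log-posterior is nondecreasing along the iterates. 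I would also note that because $f^{(t)}$ is strictly concave in the reparametrization $\theta_i = \log\lambda_i$ (the $-\beta\lambda_i$ term is strictly concave, the $\log\lambda_i$ terms concave, and the minorized term now linear in $\bm{\lambda}$), the maximizer defining \eqref{eq:wmmiter} is unique, so the iteration map is well-defined and single-valued.

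The second step is to confine the iterates to a compact set. By the ascent property, every iterate lies in the super-level set $\mathcal{S} = \{ \bm{\lambda} \in \mathbf{R}^n_{>0} : \log p(\bm{\lambda}\mid\mathcal{D}) \ge \log p(\bm{\lambda}^{(0)}\mid\mathcal{D}) \}$. In the $\bm{\theta}$ coordinates, the proof of Theorem \ref{thm:wmap} already establishes that $\log p(\bm{\theta}\mid\mathcal{D}) \to -\infty$ as $\lVert\bm{\theta}\rVert \to \infty$ (using $\alpha > 1$), so $\mathcal{S}$ is bounded; since the log-posterior is continuous and, thanks to the $-\beta e^{\theta_i}$ penalty, its super-level sets are closed subsets of $\mathbf{R}^n$ that stay away from the "boundary at $-\infty$", $\mathcal{S}$ is compact and bounded away from the coordinate hyperplanes $\lambda_i = 0$. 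Hence the iterates have at least one accumulation point in $\mathrm{int}\,\mathbf{R}^n_{>0}$.

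The third step is to identify the accumulation points. By Theorem \ref{thm:wmap}, $\log p(\bm{\lambda}\mid\mathcal{D})$ has a unique maximizer $\bm{\lambda}^\star$ on $\mathbf{R}^n_{>0}$. I would verify the two regularity hypotheses of a standard MM-convergence statement (e.g. the global-convergence theorem of Zangwill, or the MM convergence results cited via \citet{lange2000optimization}): (i) the algorithmic map $M(\bm{\lambda}) = \argmax_{\bm{\mu}} f_{\bm{\lambda}}(\bm{\mu})$ is continuous on $\mathcal{S}$ — which follows from the continuity of the minorizer jointly in the current point and the argument, together with uniqueness of the maximizer via strict concavity and a standard maximum-theorem argument; and (ii) if $\bm{\lambda}$ is not a stationary point of the log-posterior, then $\log p(M(\bm{\lambda})\mid\mathcal{D}) > \log p(\bm{\lambda}\mid\mathcal{D})$ strictly — which holds because the minorizer's gradient agrees with the log-posterior's gradient at the matching point, so a non-stationary $\bm{\lambda}$ cannot be a fixed point of $M$, and then the ascent inequality is strict. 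These two properties, plus compactness of $\mathcal{S}$, imply that every accumulation point of $\{\bm{\lambda}^{(t)}\}$ is a stationary point of the log-posterior; by strict concavity in $\bm{\theta}$ the only stationary point is $\bm{\lambda}^\star$, and a bounded sequence with a unique accumulation point converges to it.

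The main obstacle I anticipate is the soft spot in step three, namely establishing continuity of the algorithmic map $M$ and, relatedly, that the fixed points of $M$ coincide with stationary points of the log-posterior — the interchange of "argmax of the surrogate" with limits. This is routine given strict concavity of $f^{(t)}$ and compactness, but it is the part that actually requires care; everything else (the ascent inequality, compactness of $\mathcal{S}$ via Theorem \ref{thm:wmap}, uniqueness of $\bm{\lambda}^\star$) is either immediate or already proved. An alternative that sidesteps some of this bookkeeping is to note that \eqref{eq:wmmiter} is an instance of the EM algorithm (as developed in Appendix \ref{app:algorithm} via latent variables) and invoke Wu's conditions for EM convergence, but the MM route above is the most direct and self-contained.
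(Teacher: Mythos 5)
Your proposal is correct and follows essentially the same route as the paper's proof, which likewise invokes the standard MM ascent argument of \citet{hunter2004mm} (Theorem 1): the ascent property, strict concavity of the log-posterior under $\lambda_i = e^{\theta_i}$, the tangency of the surrogate implying that fixed points of the iteration map coincide with stationary points, and uniqueness of the stationary point from Theorem~\ref{thm:wmap}. Your version is in fact somewhat more careful than the paper's, which asserts continuity of the algorithmic map and skips the compactness-of-super-level-sets and accumulation-point bookkeeping that you spell out.
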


The proof follows that of \citeauthor{hunter2004mm}'s Theorem~$1$ \citeyearpar{hunter2004mm}.

\begin{proof}
Let $M: \mathbf{R}^n_{>0} \to \mathbf{R}^n_{>0}$ be the (continuous) map implicitly defined by one iteration of the algorithm.
For conciseness, let $g(\bm{\lambda}) \doteq \log p(\bm{\lambda} \mid \mathcal{D})$.
As $g$ has a unique maximizer and is concave using the reparametrization $\lambda_i = e^{\theta_i}$, it follows that $g$ has a single stationary point.
First, observe that the minorization-maximization property guarantees that $g \left[ M(\bm{\lambda}) \right] \ge g(\bm{\lambda})$.
Combined with the strict concavity of $g$, this ensures that $\lim_{t \to \infty} g(\bm{\lambda}^{(t)})$ exists and is unique for any $\bm{\lambda}^{(0)}$.
Second, $g \left[ M(\bm{\lambda}) \right] = g(\bm{\lambda})$ if and only if $\bm{\lambda}$ is a stationary point of $g$, because the minorizing function is tangent to $g$ at the current iterate.
It follows that $\lim_{t \to \infty} \bm{\lambda}^{(t)} = \bm{\lambda}^{\star}$.
\end{proof}

Theorem~\ref{thm:mmconv} of the main text follows directly by setting $w_{ij} \equiv 1$.
For completeness, the edge-streaming implementation adapted to the weighted model is given in Algorithm~\ref{alg:wchoicerank}.
The only changes with respect to Algorithm~\ref{alg:choicerank} (presented in the main text) are in lines~\ref{line:msg1} and~\ref{line:msg2}:
Every message $\gamma_i$ or $\lambda_j$ flowing through an edge $(i,j)$ is multiplied by the edge weight $w_{ij}$.

\begin{algorithm}[ht]
  \caption{ChoiceRank for the weighted model}
  \label{alg:wchoicerank}
  \begin{algorithmic}[1]
    \Require graph $G = (V, E)$, counts $\{ (c^-_i, c^+_i) \}$
    \State $\bm{\lambda} \gets [1, \ldots, 1]$
    \Repeat
      \State $\bm{z} \gets \bm{0}_n$
      \Comment{Recompute $\bm{\gamma}$}
      \OneLineFor{$(i, j) \in E$} $z_i \gets z_i + w_{ij} \lambda_j$ \label{line:msg1}
      \OneLineFor{$i \in V$} $\gamma_i \gets c^+_i / z_i$
      \State $\bm{z} \gets \bm{0}_n$
      \Comment{Recompute $\bm{\lambda}$}
      \OneLineFor{$(i, j) \in E$} $z_j \gets z_j + w_{ij} \gamma_i$ \label{line:msg2}
      \OneLineFor{$i \in V$} $\lambda_i \gets (c^-_i + \alpha - 1) / (z_i + \beta)$
    \Until $\bm{\lambda}$ has converged
  \end{algorithmic}
\end{algorithm}

\subsection{EM Viewpoint}

The MM algorithm can be seen from an EM viewpoint, following the ideas of \citet{caron2012efficient}.
We introduce $n$ independent random variables $\mathcal{Z} = \{ Z_i \mid i = 1, \ldots, n \}$, where
\begin{align*}
Z_i \sim \text{Gamma} \bigg( c^+_i, \sum_{j \in N^+_i} w_{ij} \lambda_j \bigg).
\end{align*}
With the addition of these latent random variables the complete log-likelihood becomes
\begin{align*}
\ell(\bm{\lambda} ; \mathcal{D}, \mathcal{Z})
    &= \ell(\bm{\lambda}, \mathcal{D}) + \sum_{i = 1}^n \log p(z_i \mid \mathcal{D}, \bm{\lambda}) \\
    &= \sum_{i = 1}^n \bigg[ c^-_i \log \lambda_i - c^+_i \log \sum_{k \in N^+_i} w_{ik} \lambda_k \bigg] \\
    &\qquad +\sum_{i = 1}^n \bigg[  c^+_i \log \sum_{k \in N^+_i} w_{ik} \lambda_k - z_i \sum_{k \in N^+_i} w_{ik} \lambda_k \bigg] + \kappa_6 \\
    &= \sum_{i = 1}^n \bigg[ c^-_i \log \lambda_i - z_i \sum_{k \in N^+_i} w_{ik} \lambda_k \bigg] + \kappa_6.
\end{align*}
Using a $\text{Gamma}(\alpha, \beta)$ prior for each parameter, the expected value of the log-posterior with respect to the conditional $\mathcal{Z} \mid \mathcal{D}$ under the estimate $\bm{\lambda}^{(t)}$ is
\begin{align*}
Q(\bm{\lambda}, \bm{\lambda}^{(t)})
    &= \mathbf{E}_{\mathcal{Z} \mid \mathcal{D}, \bm{\lambda}^{(t)}} \left[ \ell(\bm{\lambda} ; \mathcal{D}, \mathcal{Z}) \right]
       + \log p(\bm{\lambda}) \\
    &=\sum_{i = 1}^n \bigg[ c^-_i \log \lambda_i - c^+_i \frac{\sum_{k \in N^+_i} w_{ik} \lambda_k}{\sum_{k \in N^+_i} w_{ik} \lambda^{(t)}_k} \bigg]
      + \sum_{i = 1}^n \bigg[ (\alpha -1) \log \lambda_i - \beta \lambda_i \bigg] + \kappa_7
\end{align*}
The EM algorithm starts with an initial $\bm{\lambda}^{(0)}$ and iteratively refines the estimate by solving the optimization problem $\bm{\lambda}^{(t+1)} = \argmax_{\bm{\lambda}} Q(\bm{\lambda}, \bm{\lambda}^{(t)})$.
It is not difficult to see that for a given $\bm{\lambda}^{(t)}$, maximizing $Q(\bm{\lambda}, \bm{\lambda}^{(t)})$ is equivalent to maximizing the minorizing function $f^{(t)}(\bm{\lambda})$ defined in~\eqref{eq:wminorizing}.
Hence, the MM and the EM viewpoint lead to the exact same sequence of iterates.

The EM formulation leads to a Gibbs sampler in a relatively straightforward way \citep{caron2012efficient}.
We leave a systematic treatment of Bayesian inference in the network choice model for future work.

\hyphenation{wahrscheinlichkeits-rechnung} 
\bibliography{choicerank}

\begin{thebibliography}{27}
\providecommand{\natexlab}[1]{#1}
\providecommand{\url}[1]{\texttt{#1}}
\expandafter\ifx\csname urlstyle\endcsname\relax
  \providecommand{\doi}[1]{doi: #1}\else
  \providecommand{\doi}{doi: \begingroup \urlstyle{rm}\Url}\fi

\bibitem[Azari~Soufiani et~al.(2013)Azari~Soufiani, Chen, Parkes, and
  Xia]{azari2013generalized}
H.~Azari~Soufiani, W.~Z. Chen, D.~C. Parkes, and L.~Xia.
\newblock {Generalized Method-of-Moments for Rank Aggregation}.
\newblock In \emph{NIPS 2013}, Lake Tahoe, CA, 2013.

\bibitem[Bradley and Terry(1952)]{bradley1952rank}
R.~A. Bradley and M.~E. Terry.
\newblock {Rank Analysis of Incomplete Block Designs: I. The Method of Paired
  Comparisons}.
\newblock \emph{Biometrika}, 39\penalty0 (3/4):\penalty0 324--345, 1952.

\bibitem[Brin and Page(1998)]{brin1998anatomy}
S.~Brin and L.~Page.
\newblock {The Anatomy of a Large-Scale Hypertextual Web Search Engine}.
\newblock In \emph{WWW'98}, Brisbane, Australia, 1998.

\bibitem[Caron and Doucet(2012)]{caron2012efficient}
F.~Caron and A.~Doucet.
\newblock {Efficient Bayesian Inference for Generalized Bradley--Terry models}.
\newblock \emph{Journal of Computational and Graphical Statistics}, 21\penalty0
  (1):\penalty0 174--196, 2012.

\bibitem[Casella and Berger(2002)]{casella2002statistical}
G.~Casella and R.~L. Berger.
\newblock \emph{{Statistical Inference}}.
\newblock Duxbury Press, second edition, 2002.

\bibitem[Dines(1926)]{dines1926positive}
L.~L. Dines.
\newblock {On Positive Solutions of a System of Linear Equations}.
\newblock \emph{Annals of Mathematics}, 28\penalty0 (1/4):\penalty0 386--392,
  1926.

\bibitem[Elo(1978)]{elo1978rating}
A.~Elo.
\newblock \emph{{The Rating Of Chess Players, Past \& Present}}.
\newblock Arco, 1978.

\bibitem[Gonzalez et~al.(2014)Gonzalez, Xin, Dave, Crankshaw, Franklin, and
  Stoica]{gonzalez2014graphx}
J.~E. Gonzalez, R.~S. Xin, A.~Dave, D.~Crankshaw, M.~J. Franklin, and
  I.~Stoica.
\newblock {Graphx: Graph Processing in a Distributed Dataflow Framework}.
\newblock In \emph{OSDI'14}, pages 599--613, 2014.

\bibitem[Hajek et~al.(2014)Hajek, Oh, and Xu]{hajek2014minimax}
B.~Hajek, S.~Oh, and J.~Xu.
\newblock {Minimax-optimal Inference from Partial Rankings}.
\newblock In \emph{NIPS 2014}, Montreal, QC, Canada, 2014.

\bibitem[Hastie and Tibshirani(1998)]{hastie1998classification}
T.~Hastie and R.~Tibshirani.
\newblock {Classification by pairwise coupling}.
\newblock \emph{The Annals of Statistics}, 26\penalty0 (2):\penalty0 451--471,
  1998.

\bibitem[Hunter(2004)]{hunter2004mm}
D.~R. Hunter.
\newblock {MM algorithms for generalized Bradley--Terry models}.
\newblock \emph{The Annals of Statistics}, 32\penalty0 (1):\penalty0 384--406,
  2004.

\bibitem[Kemeny and Snell(1976)]{kemeny1976finite}
J.~G. Kemeny and J.~L. Snell.
\newblock \emph{{Finite Markov Chains}}.
\newblock Springer-Verlag, 1976.

\bibitem[Kumar et~al.(2015)Kumar, Tomkins, Vassilvitskii, and
  Vee]{kumar2015inverting}
R.~Kumar, A.~Tomkins, S.~Vassilvitskii, and E.~Vee.
\newblock {Inverting a Steady-State}.
\newblock In \emph{WSDM'15}, pages 359--368. ACM, 2015.

\bibitem[Lange et~al.(2000)Lange, Hunter, and Yang]{lange2000optimization}
K.~Lange, D.~R. Hunter, and I.~Yang.
\newblock {Optimization Transfer Using Surrogate Objective Functions}.
\newblock \emph{Journal of Computational and Graphical Statistics}, 9\penalty0
  (1):\penalty0 1--20, 2000.

\bibitem[Liu et~al.(2008)Liu, Gao, Liu, Zhang, Ma, He, and
  Li]{liu2008browserank}
Y.~Liu, B.~Gao, T.-Y. Liu, Y.~Zhang, Z.~Ma, S.~He, and H.~Li.
\newblock {BrowseRank: Letting Web Users Vote For Page Importance}.
\newblock In \emph{SIGIR'08}, pages 451--458. ACM, 2008.

\bibitem[Luce(1959)]{luce1959individual}
R.~D. Luce.
\newblock \emph{{Individual Choice behavior: A Theoretical Analysis}}.
\newblock Wiley, 1959.

\bibitem[Malewicz et~al.(2010)Malewicz, Austern, Bik, Dehnert, Horn, Leiser,
  and Czajkowski]{malewicz2010pregel}
G.~Malewicz, M.~H. Austern, A.~J.~C. Bik, J.~C. Dehnert, I.~Horn, N.~Leiser,
  and G.~Czajkowski.
\newblock {Pregel: A System for Large-Scale Graph Processing}.
\newblock In \emph{SIGMOD'10}, pages 135--145. ACM, 2010.

\bibitem[Maystre and Grossglauser(2015)]{maystre2015fast}
L.~Maystre and M.~Grossglauser.
\newblock {Fast and Accurate Inference of Plackett--Luce Models}.
\newblock In \emph{NIPS 2015}, Montreal, Canada, 2015.

\bibitem[McFadden(1973)]{mcfadden1973conditional}
D.~McFadden.
\newblock Conditional logit analysis of qualitative choice behavior.
\newblock In P.~Zarembka, editor, \emph{Frontiers in Econometrics}, pages
  105--142. Academic Press, 1973.

\bibitem[McSherry et~al.(2015)McSherry, Isard, and
  Murray]{mcsherry2015scalability}
F.~McSherry, M.~Isard, and D.~G. Murray.
\newblock {Scalability! But at what COST?}
\newblock In \emph{HotOS XV}, 2015.

\bibitem[Meusel et~al.(2014)Meusel, Vigna, Lehmberg, and
  Bizer]{meusel2014graph}
R.~Meusel, S.~Vigna, O.~Lehmberg, and C.~Bizer.
\newblock {Graph Structure in the Web---Revisited: A Trick of the Heavy Tail}.
\newblock In \emph{WWW'14 Companion}, pages 427--432, 2014.

\bibitem[Negahban et~al.(2012)Negahban, Oh, and Shah]{negahban2012iterative}
S.~Negahban, S.~Oh, and D.~Shah.
\newblock {Iterative Ranking from Pair-wise Comparisons}.
\newblock In \emph{NIPS 2012}, Lake Tahoe, CA, 2012.

\bibitem[Tomlin(2003)]{tomlin2003new}
J.~A. Tomlin.
\newblock {A New Paradigm for Ranking Pages on the World Wide Web}.
\newblock In \emph{WWW'03}, pages 350--355. ACM, 2003.

\bibitem[Train(2009)]{train2009discrete}
K.~E. Train.
\newblock \emph{{Discrete Choice Methods with Simulation}}.
\newblock Cambridge University Press, second edition, 2009.

\bibitem[Vojnovic and Yun(2016)]{vojnovic2016parameter}
M.~Vojnovic and S.-Y. Yun.
\newblock {Parameter Estimation for Generalized Thurstone Choice Models}.
\newblock In \emph{ICML 2016}, pages 498--506, 2016.

\bibitem[Wulczyn and Taraborelli(2016)]{wulczyn2016wikipedia}
E.~Wulczyn and D.~Taraborelli.
\newblock {Wikipedia Clickstream}.
\newblock Apr. 2016.
\newblock URL \url{https://dx.doi.org/10.6084/m9.figshare.1305770.v16}.

\bibitem[Zermelo(1928)]{zermelo1928berechnung}
E.~Zermelo.
\newblock {Die Berechnung der Turnier-Ergebnisse als ein Maximumproblem der
  Wahrscheinlichkeitsrechnung}.
\newblock \emph{Mathematische Zeitschrift}, 29\penalty0 (1):\penalty0 436--460,
  1928.

\end{thebibliography}

\end{document}